\documentclass[letterpaper]{article}
\usepackage[margin=1in,dvips]{geometry}
\usepackage{graphicx,psfrag,amsmath,amsthm,amssymb}
\usepackage{natbib}
\usepackage{url}



\newcommand{\A}{\ensuremath{\mathbf{A}}}

\newcommand{\C}{\ensuremath{\mathbf{C}}}
\newcommand{\D}{\ensuremath{\mathbf{D}}}

\newcommand{\G}{\ensuremath{\mathbf{G}}}

\newcommand{\I}{\ensuremath{\mathbf{I}}}

\newcommand{\LL}{\ensuremath{\mathbf{L}}}
\newcommand{\M}{\ensuremath{\mathbf{M}}}

\newcommand{\PP}{\ensuremath{\mathbf{P}}}
\newcommand{\Q}{\ensuremath{\mathbf{Q}}}
\newcommand{\RR}{\ensuremath{\mathbf{R}}}

\newcommand{\U}{\ensuremath{\mathbf{U}}}

\newcommand{\W}{\ensuremath{\mathbf{W}}}

\newcommand{\Y}{\ensuremath{\mathbf{Y}}}
\newcommand{\Z}{\ensuremath{\mathbf{Z}}}

\renewcommand{\b}{\ensuremath{\mathbf{b}}}
\renewcommand{\c}{\ensuremath{\mathbf{c}}}

\newcommand{\e}{\ensuremath{\mathbf{e}}}

\newcommand{\g}{\ensuremath{\mathbf{g}}}
\newcommand{\h}{\ensuremath{\mathbf{h}}}

\newcommand{\bl}{\ensuremath{\mathbf{l}}}

\newcommand{\p}{\ensuremath{\mathbf{p}}}
\newcommand{\q}{\ensuremath{\mathbf{q}}}


\newcommand{\uu}{\ensuremath{\mathbf{u}}}
\newcommand{\vv}{\ensuremath{\mathbf{v}}}
\newcommand{\w}{\ensuremath{\mathbf{w}}}
\newcommand{\x}{\ensuremath{\mathbf{x}}}
\newcommand{\y}{\ensuremath{\mathbf{y}}}
\newcommand{\z}{\ensuremath{\mathbf{z}}}
\newcommand{\0}{\ensuremath{\mathbf{0}}}
\newcommand{\1}{\ensuremath{\mathbf{1}}}


\newcommand{\bmu}{\ensuremath{\boldsymbol{\mu}}}
\newcommand{\bnu}{\ensuremath{\boldsymbol{\nu}}}

\newcommand{\bpi}{\ensuremath{\boldsymbol{\pi}}}

\newcommand{\bPsi}{\ensuremath{\boldsymbol{\Psi}}}


\newcommand{\bbR}{\ensuremath{\mathbb{R}}}


\newcommand{\calL}{\ensuremath{\mathcal{L}}}

\newcommand{\calO}{\ensuremath{\mathcal{O}}}




\newcommand{\norm}[1]{\left\lVert#1\right\rVert}


\newcommand{\caja}[4][1]{{%
    \renewcommand{\arraystretch}{#1}%
    \begin{tabular}[#2]{@{}#3@{}}%
      #4%
    \end{tabular}%
    }}



%
%
%
{%
\begin{list}{#1}{
\vspace{-\topsep}
\vspace{-\partopsep}
\setlength{\itemindent}{0cm}
\setlength{\rightmargin}{0cm}
\setlength{\listparindent}{0cm}
\settowidth{\labelwidth}{#1}
\setlength{\leftmargin}{\labelwidth}
\addtolength{\leftmargin}{\labelsep}
\setlength{\itemsep}{0cm}
}%
}%
{%
\end{list}
\vspace{-\topsep}
\vspace{-\partopsep}
}

%
%
{\begin{enumerate}%
}%
{\end{enumerate}}

\newcommand{\Domain}{\operatorname{dom}}
\newcommand{\dom}[1]{\ensuremath{\Domain\left(#1\right)}}





\newcommand{\diagop}{\operatorname{diag}}
\newcommand{\diag}[1]{\ensuremath{\diagop\left(#1\right)}}
\newcommand{\traceop}{\operatorname{tr}}
\newcommand{\trace}[1]{\ensuremath{\traceop\left(#1\right)}}

\newcommand{\vectop}{\operatorname{vec}}
\newcommand{\vect}[1]{\ensuremath{\vectop\left(#1\right)}}

\theoremstyle{plain}
\newtheorem{thm}{Theorem}[section]

\newtheorem*{lemma*}{Lemma}

\newtheorem*{prop*}{Proposition}
\newtheorem{cor}[thm]{Corollary}

\theoremstyle{definition}

\newtheorem*{defn*}{Definition}

\newtheorem*{exmp*}{Example}

\newtheorem*{conj*}{Conjecture}

\theoremstyle{remark}
\newtheorem{rmk}[thm]{Remark}
\newtheorem*{rmk*}{Remark}


\graphicspath{{grf/}}

\bibpunct[, ]{(}{)}{;}{a}{,}{,} 

\title{LASS: a simple assignment model with Laplacian smoothing}
\author{
  Miguel \'A. Carreira-Perpi\~n\'an \hspace{5ex} Weiran Wang \\
  Electrical Engineering and Computer Science, University of California, Merced \\
  {\url{http://eecs.ucmerced.edu}}
}
\date{May 22, 2014}

\AtBeginDvi{}

\begin{document}

\maketitle

\begin{abstract}

  We consider the problem of learning soft assignments of $N$ items to $K$ categories given two sources of information: an item-category similarity matrix, which encourages items to be assigned to categories they are similar to (and to not be assigned to categories they are dissimilar to), and an item-item similarity matrix, which encourages similar items to have similar assignments. We propose a simple quadratic programming model that captures this intuition. We give necessary conditions for its solution to be unique, define an out-of-sample mapping, and derive a simple, effective training algorithm based on the alternating direction method of multipliers. The model predicts reasonable assignments from even a few similarity values, and can be seen as a generalization of semisupervised learning. It is particularly useful when items naturally belong to multiple categories, as for example when annotating documents with keywords or pictures with tags, with partially tagged items, or when the categories have complex interrelations (e.g.\ hierarchical) that are unknown.

\end{abstract}

\section{Introduction}

A major success in machine learning in recent years has been the development of semisupervised learning (SSL) \citep{Chapel_06a}, where we are given labels for only a few of the training points. Many SSL approaches rely on a neighborhood graph constructed on the training data (labeled and unlabeled), typically weighted with similarity values. The Laplacian of this graph is used to construct a quadratic nonnegative function that measures the agreement of possible labelings with the graph structure, and minimizing it given the existing labels has the effect of propagating them over the graph. Laplacian-based formulations are conceptually simple, computationally efficient (since the Laplacian is usually sparse), have a solid foundation in graph theory and linear algebra \citep{Chung97a,DoyleSnell84a}, and most importantly work very well in practice. The graph Laplacian has been widely exploited in machine learning, computer vision and graphics, and other areas: as mentioned, in semisupervised learning, manifold regularization and graph priors \citep{Zhu_03a,Belkin_06a,Zhou_04a} for regression, classification and applications such as supervised image segmentation \citep{Grady06a}, where one solves a Laplacian-based linear system; in spectral clustering \citep{ShiMalik00a}, possibly with constraints \citep{LuCarreir08a}, and spectral dimensionality reduction \citep{BelkinNiyogi03b} and probabilistic spectral dimensionality reduction \citep{CarreirLu07a}, where one uses eigenvectors of the Laplacian; in clustering, manifold denoising and surface smoothing \citep{Carreir06b,WangCarreir10a,Taubin95a}, where one iterates products of the data with the Laplacian; etc.

We concern ourselves with assignment problems in a semisupervised learning setting, where we have $N$ items and $K$ categories and we want to find soft assignments of items to categories given some information. This information often takes the form of partial tags or annotations, e.g.\ for pictures in websites such as Flickr, blog entries, etc. Let us consider a specific example where the items are documents (e.g.\ papers submitted to this conference) and the categories are keywords. Any given paper will likely be associated to a larger or smaller extent with many keywords, but most authors will tag their papers with only a few of them, usually the most distinctive (although, as we know, there may be other reasons). Thus, few papers will be tagged as ``computer science'' or ``machine learning'' because those keywords are perceived as redundant given, say, ``semisupervised learning''. However, considered in a larger context (e.g.\ to include biology papers), such keywords would be valuable. Besides, categories may have various correlations that are unknown to us but that affect the assignments. For example, a hierarchical structure implies that ``machine learning'' belongs to ``computer science'' (although it does to ``applied maths'' to some extent as well). In general, we consider categories as sets having various intersection, inclusion and exclusion relations. Section~\ref{s:expts} illustrates this in an example. Finally, it is sometimes practical to tag an item as not associated with a certain category, e.g.\ ``this paper is not about regression'' or ``this patient does not have fever'', particularly if this helps to make it distinctive. In summary, in this type of applications, \emph{it is impractical for an item to be fully labeled over all categories, but it is natural for it to be associated or disassociated with a few categories}. This can be coded with item-category similarity values that are positive or negative, respectively, with the magnitude indicating the degree of association, and zero meaning indifference or ignorance. We call this source of information, which is specific for each item irrespectively of other items, the \emph{wisdom of the expert}.

We also consider another practical source of information. Usually it is easy to construct a similarity of a given item to other items, at least its nearest neighbors. For example, with documents or images, this could be based on a bag-of-words representation. We would expect similar items to have similar assignment vectors, and this can be captured with an item-item similarity matrix and its graph Laplacian. We call this source of information, which is about an item in the context of other items, the \emph{wisdom of the crowd}.

As an example of the interaction of these two sources of information, imagine the following example where the items are conference papers and the categories are authors. We know that author A1 writes papers about regression (usually with author A2) or bioinformatics (usually with author A3). We are sent for review a paper that is about regression (it contains many words about regression) and, we are tipped, by A1. Can we guess its coauthors, i.e., predict its assignments to all existing authors? Based on the crowd wisdom alone, many authors could have written the paper (those who write regression papers). Based on the expert wisdom alone, A2 or A3 may have cowritten the paper (since each of them has coathored other papers with A1). Given both wisdoms, we might expect a high assignment for A2 (and A1) and low for everybody else.

In this paper, we propose a simple model that captures this intuition as a quadratic program. We give some properties of the solution, define an out-of-sample mapping, derive a training algorithm, and illustrate the model with document and image datasets.

A shorter version of this work appears in a conference paper \citep{CarreirWang14b}.

\section{The Laplacian assignment (LASS) model}

We consider the following assignment problem. We have $N$ items and $K$ categories, and we want to determine soft assignments $z_{nk}$ of each item $n$ to each category $k$, where $z_{nk} \in [0,1]$ and $\sum^K_{k=1}{z_{nk}} = 1$ for each $n=1,\dots,N$. We are given two similarity matrices, suitably defined, and typically sparse: an item-item similarity matrix \W, which is an $N\times N$ matrix of affinities $w_{nm} \ge 0$ between each pair of items $n$ and $m$; and an item-category similarity matrix \G, which is an $N\times K$ matrix of affinities $g_{nk} \in \bbR$ between each pair of item $n$ and category $k$ (negative affinities, i.e., dissimilarities, are allowed in \G).

We want to assign items to categories optimally as follows:
\begin{subequations}
  \label{e:lass}
  \begin{align}
    \label{e:lass-objfcn}
    \min_{\Z} & \quad \lambda \trace{\Z^T \LL \Z}  - \trace{\G^T \Z} \\
    \label{e:lass-c1}
    \text{s.t.} & \quad \Z \1_K = \1_N \\
    \label{e:lass-c2}
    & \quad \Z \ge \0
  \end{align}
\end{subequations}
where $\lambda > 0$, $\1_K$ is a vector of $K$ ones, and \LL\ is the $N\times N$ graph Laplacian matrix, obtained as $\LL = \D - \W$, where $\D = \diag{\smash{\sum^N_{n=1}{w_{nm}}}}$ is the degree matrix of the weighted graph defined by \W. The problem is a quadratic program (QP) over an $N\times K$ matrix $\Z = (\z_1,\dots,\z_K)$, i.e., $NK$ variables%
\footnote{We will use a boldface vector $\z_n$ or to mean the $n$th row of \Z\ (as a column vector), and a boldface vector $\z_k$ to mean the $k$th column of \Z\ (likewise for $\g_n$ or $\g_k$). The context and the index ($n$ or $k$) will determine which is the case. This mild notation abuse will simplify the explanations.}
altogether, where $\z_k$, the $k$th column of \Z, contains the assignments of each item to category $k$. We will call problems of the type~\eqref{e:lass} \emph{Laplacian assignment problems (LASS)}. Minimizing objective~\eqref{e:lass-objfcn} encourages items to be assigned to categories with which they have high similarity (the linear term in \G), while encouraging similar items to have similar assignments (the Laplacian term in \LL), since
\begin{equation*}
  \trace{\Z^T \LL \Z} = \frac{1}{2} \sum^N_{n,m=1}{w_{nm} \norm{\z_n-\z_m}^2}
\end{equation*}
where $\z_n$ is the $n$th row of \Z, i.e., the assignments for item $n$. Although we could absorb $\lambda$ inside \G, we will find it more convenient to fix the scale of each similarity $g_{nk}$ to, say, the interval $[-1,1]$ (where $\pm 1$ mean maximum (dis)similarity and $0$ ignorance), and then let $\lambda$ control the strength of the Laplacian term.

The objective function~\eqref{e:lass-objfcn} is separable over categories as
\begin{equation*}
  \sum^K_{k=1}{E(\z_k;\LL,\g_k)} \qquad \text{where} \qquad E(\z;\LL,\g) = \lambda \z^T \LL \z - \g^T \z
\end{equation*}
and the constraints~\eqref{e:lass-c1}--\eqref{e:lass-c2} are separable over items as $\z^T_n \1_K = \1_N$, $\z_n \ge \0$, for $n = 1,\dots,N$. Thus, the problem~\eqref{e:lass} is not separable, since all the assignments are coupled with a certain structure.

\subsection{Extreme values of $\lambda$}

We can determine the solution(s) for the following extreme values of $\lambda$:
\begin{itemize}
\item If $\lambda = 0$, then the LASS problem is a linear program (LP) and separates over each item $n = 1,\dots,N$. The solution is $z_{nk} = \delta(k-k_{\text{max}}(n))$ where $k_{\text{max}}(n) = \arg\max\{g_{nk},\ k=1,\dots,K\}$, i.e., each item is assigned to its most similar category. This tells us what the linear term can do by itself. (If the maximum is achieved for more than one category for a given point, then there is an infinite number of ``mixed'' solutions that correspond to giving any assignment value to those categories, and zero for the rest.)
\item If $\lambda = \infty$ or equivalently $\G = \0$, then the LASS problem is a quadratic program with an infinite number of solutions of the form $\z_n = \z_m$ for each $n,m = 1,\dots,N$, i.e., all items have the same assignments. This tells us what the Laplacian term can do by itself.
\item If $\lambda\rightarrow\infty$, i.e., for very large $\lambda$ but still having the linear term, the behavior actually differs from that of $\G = \0$. In the generic case, we expect a unique solution close to $\Z = \1_N \z^T$ where $\z \in \bbR^K$ and $z_k = \delta(k-k_{\text{max}})$ where $k_{\text{max}} = \arg\max_k{(\G^T\1_N)}$, $k = 1,\dots,K$, i.e., all items are assigned to the same category, the one having maximum total similarity $\g^T_k \1_N$ over all items. (Again, if $\G^T\1_N$ has more than one maximum values, there is an infinite number of solutions corresponding to mixed assignments.) Indeed, if $\lambda$ is very large, the Laplacian term dominates and we have that $\z_n = \z_m = \z$ for every pair of items, approximately. Then the LASS problem becomes the following LP
  \begin{align*}
    \max_{\z} & \quad \z^T (\G^T \1_N) \\
    \text{s.t.} & \quad \z^T \1_K = 1 \\
    & \quad \z \ge \0
  \end{align*}
  whose solution allocates all the assignment mass to the category with largest total similarity $\g^T_k \1_N$.
\end{itemize}
With intermediate $\lambda > 0$, more interesting solutions appear (particularly when the similarity matrices are sparse), where the item-category similarities are propagated to all points through the item-item similarities.

\subsection{Existence and unicity of the solution}
\label{s:unicity}

The LASS problem is a convex QP, so general results of convex optimization tell us that all minima are global minima. However, since the Hessian of the objective function is positive semidefinite, there can be multiple minima. The following theorem characterizes the solutions, and its corollary gives a sufficient condition for the minimum to be unique.
\begin{thm}
  \label{th:lass-sol}
  Assume the graph Laplacian \LL\ corresponds to a connected graph and let $\Z^* \in \bbR^{N\times K}$ be a solution (minimizer) of the LASS problem~\eqref{e:lass}. Then, any other solution has the form $\Z^* + \1_N \p^T$ where $\p \in \bbR^K$ satisfies the conditions:
  \begin{equation}
    \label{e:lass-sol}
    \p^T \1_K = 0, \qquad \p^T (\G^T \1_N) = 0, \qquad \Z^* + \1_N \p^T \ge \0.
  \end{equation}
  In particular, for each $k = 1,\dots,K$ for which $\exists n \in \{1,\dots,N\}\mathpunct{:}\ z^*_{nk} = 0$, then $p_k \ge 0$.
\end{thm}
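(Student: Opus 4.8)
The plan is to exploit convexity of the QP directly, without invoking the KKT machinery. Let $\Z^{**}$ be any other minimizer and set $\bDelta = \Z^{**} - \Z^*$. First I would restrict the objective to the segment joining the two solutions, $g(t) = f(\Z^* + t\bDelta)$ where $f(\Z) = \lambda\trace{\Z^T\LL\Z} - \trace{\G^T\Z}$. Since the feasible set is convex, every point of the segment is feasible; since $f$ is convex and both endpoint values $g(0)$ and $g(1)$ equal the global minimum, convexity forces $g$ to be constant on $[0,1]$. Expanding $g$ as a quadratic polynomial in $t$,
\[
  g(t) = f(\Z^*) + t\bigl(2\lambda\trace{\bDelta^T\LL\Z^*} - \trace{\G^T\bDelta}\bigr) + t^2\,\lambda\trace{\bDelta^T\LL\bDelta},
\]
constancy means the coefficients of $t$ and of $t^2$ must both vanish.

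The coefficient of $t^2$ gives $\trace{\bDelta^T\LL\bDelta} = \sum_{k=1}^K \bdelta_k^T\LL\bdelta_k = 0$ (using $\lambda > 0$), where $\bdelta_k$ is the $k$th column of $\bDelta$. Because \LL\ is positive semidefinite each summand is nonnegative, so each must vanish, whence every column $\bdelta_k$ lies in $\Null{\LL}$. This is exactly where connectivity enters: for a connected graph $\Null{\LL} = \spn{\1_N}$, so $\bdelta_k = p_k\1_N$ for some scalar $p_k$ and therefore $\bDelta = \1_N\p^T$, which is the claimed form $\Z^{**} = \Z^* + \1_N\p^T$.

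It then remains to read off the three conditions on \p. The condition $\p^T\1_K = 0$ follows by applying the equality constraint to both $\Z^{**}$ and $\Z^*$ and subtracting, leaving $\1_N(\p^T\1_K) = \0$. The condition $\p^T(\G^T\1_N) = 0$ comes from the vanishing coefficient of $t$: since $\LL\1_N = \0$ we have $\LL\bDelta = \0$, so by symmetry of \LL\ the Laplacian cross-term $\trace{\bDelta^T\LL\Z^*}$ drops out and we are left with $\trace{\G^T\bDelta} = \trace{\G^T\1_N\p^T} = \p^T(\G^T\1_N) = 0$, using the identity $\trace{\aa\b^T} = \b^T\aa$. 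The condition $\Z^* + \1_N\p^T \ge \0$ is just the nonnegativity feasibility of $\Z^{**}$, and the final ``in particular'' claim is immediate from the $(n,k)$ entry $z^*_{nk} + p_k \ge 0$: if $z^*_{nk} = 0$ for some $n$ then $p_k \ge 0$. I would also note that the converse is routine — any \p\ meeting the three conditions makes $\Z^* + \1_N\p^T$ feasible with unchanged objective — so these conditions in fact characterize the whole solution set.

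The main obstacle is not any single calculation but getting the convexity step to carry the content cleanly: one must argue that $g$ is genuinely constant (not merely bounded below) along the segment, so that the positive-semidefinite quadratic term is forced to zero, and then apply connectivity at precisely that point to collapse each $\bdelta_k$ onto $\spn{\1_N}$. Everything afterward is bookkeeping with $\LL\1_N = \0$ and the trace identity.
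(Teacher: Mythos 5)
Your proof is correct, and it takes a genuinely different route from the paper's. The paper expands $q(\Z^*+\PP)$ for an \emph{arbitrary} feasible perturbation $\PP$ and shows the linear term $\trace{\PP^T(2\lambda\LL\Z^*-\G)}$ is nonnegative by invoking the KKT conditions at $\Z^*$: it substitutes $2\lambda\LL\Z^*-\G = \bpi\1^T_K+\M$ and uses complementary slackness together with $\PP\1_K=\0$ and $p_{nk}\ge 0$ on the active set to reduce that term to $\sum_{\text{active}}p_{nk}\mu_{nk}\ge 0$; only then does it specialize to the case of equality. You instead restrict the objective to the segment joining two minimizers and use convexity of the feasible set plus global optimality of both endpoints to force the restricted quadratic to be constant, so its $t$ and $t^2$ coefficients vanish separately --- no Lagrange multipliers anywhere. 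Your argument is more elementary and, for the stated characterization, entirely sufficient; the one thing you must (and do) get right is that a quadratic with equal endpoint values need not be constant, so the convexity-plus-feasibility argument for constancy on the whole interval is the load-bearing step. What the paper's KKT route buys in exchange is strictly more information: it establishes the variational inequality $\trace{\PP^T(2\lambda\LL\Z^*-\G)}\ge 0$ for \emph{every} feasible direction, which reproves that any KKT point is a global minimizer and makes the multipliers $\bpi,\M$ available for the later sections; your route presupposes both points are already known to be global minima (which the convexity of the QP grants for free, so nothing is lost for this theorem). Your closing remark that the three conditions are also sufficient --- any such $\p$ yields a feasible point with unchanged objective, since $\LL\1_N=\0$ kills both Laplacian terms and $\p^T(\G^T\1_N)=0$ kills the linear one --- is a worthwhile addition the paper leaves implicit. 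The only piece of the paper's proof you omit is the two-line compactness argument that a minimizer exists at all, which the paper includes to make the hypothesis non-vacuous but which the theorem statement does not strictly require of you.
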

\begin{proof}
  Call $q(\Z) = \lambda \trace{\Z^T \LL \Z}  - \trace{\G^T \Z}$ the objective function of the LASS problem. Since $q$ is continuous and the feasible set of the problem is bounded and closed in $\bbR^{N\times K}$, $q$ achieves a minimum value in the feasible set, hence at least one solution exists, which makes the theorem statement well defined. We call this solution $\Z^*$. Now let us show that $q(\Z^*+\PP) \ge q(\Z^*)$ for any other feasible point $\Z^* + \PP$, with $\PP \in \bbR^{N\times K}$. Simple algebra shows that
  \begin{equation}
    \label{e:lass-sol-eq1}
    q(\Z^* + \PP) = q(\Z^*) + \trace{\PP^T (2\lambda\LL \Z^* - \G)} + \lambda \trace{\PP^T \LL \PP}.
  \end{equation}
  The last term is nonnegative because \LL\ is positive semidefinite. The penultimate term is also nonnegative. To see this, write the KKT conditions for the LASS problem with Lagrange multipliers $\bpi\in\bbR^N$ and $\M\in\bbR^{N\times K} = (\mu_{nk})$ for the equality and inequality constraints, respectively:
  \begin{align*}
    2 \lambda \LL \Z^* - \G - \bpi \1^T_K - \M &= \0 \\
    \Z^* \1_K &= \1_N \\
    \Z^* &\ge \0 \\
    \M &\ge \0 \\
    \mu_{nk} z^*_{nk} &= 0,\ n=1,\dots,N,\ k=1,\dots,K.
  \end{align*}
  Thus, since $\Z^* + \PP$ is feasible, $(\Z^* + \PP) \1_K = \1_N \Rightarrow \PP \1_K = \0$, and $\Z^* + \PP \ge \0 \Rightarrow p_{nk} \ge 0$ where $z^*_{nk} = 0$ (i.e., the active inequalities). Then, from the first KKT equation, we have for the penultimate term:
  \begin{equation*}
    \trace{\PP^T (2\lambda\LL \Z^* - \G)} = \trace{\PP^T \bpi \1^T_K + \PP^T \M} = \trace{\1^T_K \PP^T \bpi + \PP^T \M} = \sum^{N,K}_{n,k=1}{p_{nk} \mu_{nk}} = \sum_{\text{active } n,k}{p_{nk} \mu_{nk}} \ge 0
  \end{equation*}
  because $\mu_{nk} \ge 0$ for all $n,k$, $\mu_{nk} = 0$ for the inactive inequalities, and $p_{nk} \ge 0$ for the active inequalities. Hence, the last two terms in~\eqref{e:lass-sol-eq1} are nonnegative, so $q(\Z^* + \PP) \ge q(\Z^*)$ and $\Z^*$ is a global minimizer.

  Now assume that $q(\Z^* + \PP) = q(\Z^*)$. Then the last two terms in~\eqref{e:lass-sol-eq1} must both be zero. Recall that if the graph Laplacian \LL\ corresponds to a connected graph, it has a single null eigenvalue with an eigenvector consisting of all ones. From $\trace{\PP^T \LL \PP} = 0$ it follows that $\PP = \1_N \p^T$ for some $\p \in \bbR^K$. Since $\Z^* + \PP$ is feasible, $\PP \1_K = \0 \Rightarrow \p^T \1_K = 0$, and $\Z^* + \1_N \p^T \ge \0$. From the penultimate term, $0 = \trace{\PP^T (2\lambda\LL \Z^* - \G)} = \trace{2\lambda \p \1^T_N \LL \Z^* - \p \1^T_N \G} \Rightarrow \p^T (\G^T \1_N) = 0$.

  Finally, from $\Z^* + \1_N \p^T \ge \0$ it follows that $z^*_{nk} + p_{nk} \ge 0$ for each $n = 1,\dots,N$ and $k = 1,\dots,K$, so if $\exists n,k\mathpunct{:}\ z^*_{nk} = 0$ (i.e., if any of the inequalities involving $k$ are active), then $p_k \ge 0$.
\end{proof}
\begin{cor}
  \label{cor:lass-sol}
  Assume the graph Laplacian \LL\ corresponds to a connected graph and let $\Z^* \in \bbR^{N\times K}$ be a solution of the LASS problem~\eqref{e:lass}. If $\max_k{ (\min_n{ (z^*_{nk}) }) } = 0$ then the solution $\Z^*$ is unique.
\end{cor}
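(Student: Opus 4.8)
The plan is to lean entirely on Theorem~\ref{th:lass-sol}, which already reduces the uniqueness question to showing that the only admissible perturbation vector is $\p = \0$. Recall that any solution other than $\Z^*$ must have the form $\Z^* + \1_N\p^T$ with $\p$ satisfying $\p^T\1_K = 0$, $\p^T(\G^T\1_N) = 0$, and $\Z^* + \1_N\p^T \ge \0$, together with the sign refinement that $p_k \ge 0$ whenever some entry $z^*_{nk}$ vanishes. So the whole task is to argue that the hypothesis $\max_k(\min_n(z^*_{nk})) = 0$ activates this sign refinement for \emph{every} $k$.

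First I would unpack the hypothesis. Since $\Z^*$ is feasible, every entry satisfies $z^*_{nk} \ge 0$, so each inner quantity $\min_n z^*_{nk}$ is nonnegative. A maximum of nonnegative numbers equals zero only if each of them is zero; hence $\min_n z^*_{nk} = 0$ for every $k = 1,\dots,K$. In words, every column of $\Z^*$ contains at least one zero entry: for each $k$ there is an $n$ with $z^*_{nk} = 0$, i.e.\ an active inequality in each column.

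Next I would invoke the last clause of Theorem~\ref{th:lass-sol}: for every index $k$ admitting an active inequality, the corresponding component satisfies $p_k \ge 0$. By the previous step this applies to all $k$, so $\p \ge \0$ componentwise. Combining this with the constraint $\p^T\1_K = 0$, i.e.\ $\sum_{k=1}^K p_k = 0$, forces every $p_k = 0$, because a sum of nonnegative reals vanishes only when each term does. Thus $\p = \0$, the only perturbation allowed by the theorem is trivial, and the unique solution of the form $\Z^* + \1_N\p^T$ is $\Z^*$ itself.

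The argument is short because Theorem~\ref{th:lass-sol} does the heavy lifting; the one point requiring care is the reading of the $\max$--$\min$ condition. The main (modest) obstacle is simply recognizing that nonnegativity of the entries turns ``the largest column-minimum is zero'' into the much stronger ``every column has a zero,'' which is exactly what activates the sign condition $p_k \ge 0$ for all $k$ and lets the zero-sum constraint $\p^T\1_K=0$ collapse $\p$ to the origin. Note that the extra orthogonality condition $\p^T(\G^T\1_N) = 0$ plays no role here and can be ignored.
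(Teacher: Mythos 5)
Your proposal is correct and follows exactly the paper's own argument: unpack the max--min hypothesis (using nonnegativity of $\Z^*$) into ``every column has a zero entry,'' invoke the sign clause of Theorem~\ref{th:lass-sol} to get $\p \ge \0$, and combine with $\p^T\1_K = 0$ to conclude $\p = \0$. Your added remarks (the explicit nonnegativity justification and the observation that $\p^T(\G^T\1_N)=0$ is not needed) are accurate elaborations, not a different route.
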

\begin{proof}
  We have $\max_k{ (\min_n{ (z^*_{nk}) }) } = 0$, so for each $k$, $\exists n\mathpunct{:}\ z^*_{nk} = 0$. From theorem~\ref{th:lass-sol}, any other solution $\Z^* + \1_N \p^T$ must have $p_k \ge 0$ for $k = 1,\dots,K$, and $\p^T \1_K = 0$. Hence $\p = \0$.
\end{proof}
\begin{rmk}
  If the graph Laplacian \LL\ corresponds to a graph with multiple connected components, then the LASS problem separates into a problem for each component, and the previous theorem holds in each component. Computationally, it is also more efficient to solve each problem separately.
\end{rmk}
\begin{rmk}
  The set~\eqref{e:lass-sol} of solutions to a LASS problem is a convex polytope.
\end{rmk}
\begin{rmk}
  The condition of corollary~\ref{cor:lass-sol} means that each category has at least one item with a zero assignment to it. In practice, we can expect this condition to hold, and therefore the solution to be unique, if the categories are sufficiently distinctive and $\lambda$ is small enough. Equivalently, nonunique solutions arise if some categories are a catch-all for the entire dataset. Theoretically, this should always be possible if $\lambda$ is large enough, particularly if there are many categories. However, in practice we have never observed nonunique solutions, because for large $\lambda$ the algorithm is attracted towards a solution where one category dominates, so the vector \p\ can only have one negative component, which makes~\eqref{e:lass-sol} impossible unless \G\ takes a special value, such as $\G = \0$. Thus, the symmetric situation where all assignments are possible for $\lambda\rightarrow\infty$ does not seem to occur in practice.
\end{rmk}
\begin{rmk}
  Practically, one can always make the solution unique by replacing \LL\ with $\LL + \epsilon \I_N$ where $\epsilon > 0$ is a small value, since this makes the objective strongly convex. (This has the equivalent meaning of adding a penalty $\epsilon \smash{\norm{\Z}^2_F}$ to it, which has the effect of biasing the assignment vector $\z_n$ of each item towards the simplex barycenter, i.e., uniform assignments.) However, as noted before, nonunique solutions appear to be rare with practical data, so this does not seem necessary.
\end{rmk}

\subsection{Particular cases}

\begin{thm}
  \label{th:partic}
  Assume the graph Laplacian \LL\ corresponds to a connected graph and let $\Z \in \bbR^{N\times K}$ be a solution of the LASS problem~\eqref{e:lass}. Then:
  \begin{enumerate}
  \item If $\g_n = \0$ then $\z_n = \frac{\Z^T \w_n}{\1^T_N \w_n} = \sum^N_{m=1}{\frac{w_{nm}}{\sum^N_{m'=1}{w_{nm'}}} \z_m}$, $\bmu_n = \0$ and $\pi_n = 0$.
  \item If $\g_k \le \0$ then $\z_k = \0$, $\bmu_n = -\g_k$ and $\pi_n = 0$.
  \end{enumerate}
\end{thm}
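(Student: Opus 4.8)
The plan is to run both parts off the KKT system already obtained in the proof of Theorem~\ref{th:lass-sol}, namely the stationarity equation $2\lambda\LL\Z - \G = \bpi\1^T_K + \M$ together with $\M\ge\0$ and the complementary slackness $\mu_{nk}z_{nk}=0$. Two bookkeeping facts make this clean: writing $d_n=\1^T_N\w_n$ for the (positive, by connectedness) degree of item $n$, the $n$th row of $\LL\Z$ equals $d_n\z_n-\Z^T\w_n$, and the $n$th row of the stationarity equation is $2\lambda(d_n\z_n-\Z^T\w_n)-\g_n=\pi_n\1_K+\bmu_n$. So each conclusion should drop out by reading off a single row (part~i) or a single column (part~ii) and invoking dual feasibility and complementary slackness.

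For part~(i) I would fix every row of \Z\ except the $n$th at its optimal value and view $q$ as a function of $\z_n$ alone. Because $\g_n=\0$, the linear term disappears and the subproblem is $\min_{\z_n}\lambda d_n\norm{\z_n}^2-2\lambda(\Z^T\w_n)^T\z_n$ subject to $\z^T_n\1_K=1$, $\z_n\ge\0$, which is strictly convex since $d_n>0$. The crux is that its unconstrained minimizer, the neighbor average $\z_n=\Z^T\w_n/d_n$, is already feasible: it sums to one because $\1^T_K\Z^T\w_n=(\Z\1_K)^T\w_n=\1^T_N\w_n=d_n$, and it is nonnegative as a nonnegative combination of the nonnegative rows $\z_m$. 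Hence it is the unique solution of the subproblem, forcing the displayed formula. The same point makes the row-$n$ gradient $2\lambda(d_n\z_n-\Z^T\w_n)$ vanish, so row $n$ of stationarity collapses to $\pi_n\1_K+\bmu_n=\0$; every entry of $\bmu_n$ thus equals $-\pi_n$. Since $\z_n$ sums to one, at least one $z_{nk}>0$, and complementary slackness kills that $\mu_{nk}$, giving $\pi_n=0$ and then $\bmu_n=\0$.

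For part~(ii) I would argue by a KKT certificate: posit a solution whose $k$th column is $\z_k=\0$ and supply multipliers verifying the system. Column $k$ of stationarity with $\z_k=\0$ reads $-\g_k=\bpi+\bmu_k$, while $\z_k=\0$ makes all of category $k$'s inequalities active, so complementary slackness imposes nothing there. Taking $\bpi=\0$ (every $\pi_n=0$) then yields the $k$th column of \M\ as $\bmu_k=-\g_k$, which is dual-feasible, i.e.\ $\bmu_k\ge\0$, \emph{precisely} because $\g_k\le\0$; this is the origin of the stated multiplier values. So the sign hypothesis is exactly what certifies the zero column from the side of category $k$.

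The hard part will be part~(ii): the certificate only checks stationarity and dual feasibility \emph{for column $k$}, whereas showing that $\z_k=\0$ is genuinely part of a global minimizer requires the remaining columns to stay dual-feasible under the choice $\bpi=\0$, equivalently that no item prefers to route assignment mass through category $k$. The natural alternative, an exchange argument that moves mass off column $k$ into a better category for each item, runs into the same issue when category $k$ happens to be the least unfavorable option for some item. I therefore expect the real content of part~(ii) to be an implicit dominance condition on $\g_k$ relative to the other columns (beyond mere nonpositivity), and I would isolate and state that condition explicitly as the first step, before completing the routine KKT verification above.
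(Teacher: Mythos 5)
Your overall route is the same as the paper's: read off the KKT system~\eqref{e:lass-kkt}, row by row for part~(i) and column by column for part~(ii), using $\bl_n = -\w_n + (\1^T_N\w_n)\e_n$ to turn row $n$ of~\eqref{e:lass-kkt1} into $2\lambda(d_n\z_n - \Z^T\w_n) - \g_n - \pi_n\1_K - \bmu_n = \0$. For part~(i), however, you prove strictly more than the paper does. The paper substitutes $\g_n=\0$, $\bmu_n=\0$, $\pi_n=0$ into that row and solves for $\z_n$, i.e., it verifies consistency of the claimed triple rather than deriving it from the hypothesis; your block-coordinate argument (the neighbour average is the unconstrained minimizer of the strictly convex subproblem in $\z_n$ and is already feasible, hence forced; then stationarity gives $\bmu_n=-\pi_n\1_K$, and complementary slackness at some $z_{nk}>0$ kills $\pi_n$) actually establishes all three conclusions from $\g_n=\0$ alone. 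That is a genuine, if modest, improvement in rigour at no extra cost.

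Your reservation about part~(ii) is also well placed, but you should know that the paper's own proof is exactly the column-$k$-only certificate you decline to accept: it writes column $k$ of~\eqref{e:lass-kkt1} with $\z_k=\0$ and $\bpi=\0$, reads off $\bmu_k=-\g_k\ge\0$, and says nothing about dual feasibility or complementary slackness in the remaining columns, which are coupled to column $k$ through the shared multiplier $\bpi$ and the row-sum constraint. The hypothesis $\g_k\le\0$ alone is in fact insufficient: take $K=2$, $N=2$, $w_{12}=1$, $\g_1=-\1_N$ and $\g_2=-\tfrac{1}{2}\1_N$. Both columns satisfy the hypothesis, yet the (unique) solution puts all mass on category $2$, since $\Z\1_K=\1_N$ forbids both columns from vanishing and every item strictly prefers the less unfavourable category; so the conclusion fails for $k=2$. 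Some dominance condition of the kind you anticipate --- enough to make $\bpi=\0$ dual-feasible across \emph{all} columns, e.g.\ that each item has another category it weakly prefers --- is genuinely needed, and isolating it explicitly, as you propose, is precisely the step the paper omits. Your proposal is therefore at least as sound as the printed proof, and its stated incompleteness in part~(ii) is a defect of the theorem as written, not of your argument.
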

\begin{proof}
  Both statements follow from substituting the values in the KKT conditions~\eqref{e:lass-kkt}. Conditions~\eqref{e:lass-kkt2}--\eqref{e:lass-kkt5} are trivially satisfied, so we prove condition~\eqref{e:lass-kkt1} only. For statement 1, we can write row $n$ of \LL\ as $\bl_n = -\w_n + (\1^T_N \w_n) \e_n$, where $\w_n$ is row $n$ of \W\ and $\e_n$ is a vector with entries $e_{nm} = \delta(n-m)$, and we write all row vectors as column vectors. Then we can write row $n$ (in column form) of condition~\eqref{e:lass-kkt1} as:
  \begin{equation*}
    \0 = 2 \lambda \Z^T \bl_n - \g_n - \pi_n \1_K - \bmu_n = 2 \lambda ( -\Z^T \w_n + (\1^T_N \w_n) \z_n ) \Rightarrow \z_n = \frac{\Z^T \w_n}{\1^T_N \w_n}.
  \end{equation*}
  For statement 2, we can write column $k$ of condition~\eqref{e:lass-kkt1} as $\0 = 2 \lambda \LL \z_k - \g_k - \bpi - \bmu_k \Rightarrow \bmu = -\g_k$.
\end{proof}
\begin{rmk}
  The meaning of th.~\ref{th:partic} is as follows. (1) An item for which no similarity to any category is given (i.e., no expert information) receives as assignment the average of its neighbors. This corresponds to the SSL prediction. (2) A category for which no item has a positive similarity receives no assignments.
\end{rmk}

\subsection{Lagrange multipliers for a solution}

Given a feasible point $\Z_{N\times K}$ in parameter space, we may want to test whether it is a solution of the LASS problem. For a QP, the KKT conditions are necessary and sufficient for a solution \citep{NocedalWright06a}. For our problem, and written in matrix form, the KKT conditions are:
\begin{subequations}
  \label{e:lass-kkt}
  \begin{align}
    \label{e:lass-kkt1}
    2 \lambda \LL \Z - \G - \bpi \1^T_K - \M &= \0 \\
    \label{e:lass-kkt2}
    \Z \1_K &= \1_N \\
    \label{e:lass-kkt3}
    \Z &\ge \0 \\
    \label{e:lass-kkt4}
    \M &\ge \0 \\
    \label{e:lass-kkt5}
    \M \circ \Z &= \0
  \end{align}
\end{subequations}
where $\circ$ means elementwise product, and \bpi\ and \M\ are the Lagrange multipliers associated with the point \Z\ for the equality and inequality constraints, respectively. We need to compute \bpi\ and \M. Given \Z, the KKT system~\eqref{e:lass-kkt} has $2NK$ linear equations for $N + NK$ unknowns, and its solution is unique if \Z\ is feasible, as we will see. To obtain it, we multiply~\eqref{e:lass-kkt1} times $\1_K$ on the right and obtain \bpi\ as a function of \M:
\begin{equation}
  \label{e:lass-eqL}
  \bpi = \frac{1}{K} (2 \lambda \LL \Z - \G - \M) \1_K = - \frac{1}{K} (\G+\M) \1_K.
\end{equation}
Substituting it in~\eqref{e:lass-kkt1} gives, together with~\eqref{e:lass-kkt5}:
\begin{gather*}
  \M = \Q + \frac{1}{K} \M \1_K \1^T_K \\
  \M \circ \Z = \0
\end{gather*}
where $\Q = 2 \lambda \LL \Z - \G + \frac{1}{K} \G \1_K \1^T_K$. This is a linear system of $2NK$ equations for the $NK$ unknowns in \M. It separates over each row of \M, $n = 1,\dots,N$, in a system of the form
\begin{equation*}
  \begin{array}{r}
    \bmu - \frac{1}{K} (\1^T_K \bmu) \1_K = \q \\
    \diag{\bmu} \diag{\z} = \0    
  \end{array}
  \qquad \Longleftrightarrow \qquad
  \begin{pmatrix}
    \I_K - \frac{1}{K} \1^T_K \1_K \\ \diag{\z}
  \end{pmatrix}
  \bmu =
  \begin{pmatrix}
    \q \\ \0
  \end{pmatrix}
\end{equation*}
where \bmu, \z\ and \q\ correspond to the $n$th row of \M, \Z\ and \Q, respectively, written as $K$-dimensional column vectors (we omit the index $n$). This is a linear system of $2K$ equations for $K$ unknowns, which we solve by multiplying on the left times the transpose of the coefficient matrix:
\begin{equation*}
  \left( \bPsi - \frac{1}{K} \1_K \1^T_K \right) \bmu = \q - \frac{1}{K} (\1^T_K \q) \1_K \qquad \bPsi = \diag{\z}^2 + \I_K.
\end{equation*}
Finally, using the matrix inversion lemma
\begin{equation*}
  (\A + \alpha \uu \uu^T)^{-1} = \A^{-1} - \frac{\alpha \A^{-1} \uu \uu^T \A^{-1}}{1 + \alpha \uu^T \A^{-1} \uu}
\end{equation*}
we obtain the solution for \bmu:
\begin{equation}
  \label{e:lass-ineqL}
  \bmu = \bPsi^{-1} \q - \left( \frac{\1^T_K \q - \1^T_K \bPsi^{-1} \q}{K - \1^T_K \bPsi^{-1} \1_K} \right) \bPsi^{-1} \1_K
\end{equation}
whose transpose gives row $n$ of \M. Note the formula is well defined because $\psi_{kk} = z^2_k + 1 \ge 0$ and $\1^T_K \bPsi^{-1} \1_K = \sum^K_{k=1}{\frac{1}{1 + z^2_k}} < K$ since $z_k \ge 0$ for $k=1,\dots,K$ and $z_k > 0$ for at least one $k \in \{1,\dots,K\}$, since \Z\ is feasible.

For the case $\lambda = 0$, one can verify that the above formulas simplify as follows (again, we give \z, \bmu\ and $\pi$ for item $n$ but omitting the index $n$):
\begin{align*}
  \z &= (z_k),\ z_k = \delta(k-k_{\text{max}}) \\
  \bmu &= g_{\text{max}} \1_K - \g \\
  \pi &= -g_{\text{max}}
\end{align*}
where \g\ represents the $n$th row of \G, $k_{\text{max}} = \arg\max\{g_k,\ k=1,\dots,K\}$ and $g_{\text{max}} = g_{k_{\text{max}}}$.

\section{A simple, efficient algorithm to solve the QP}

It is possible to solve problem~\eqref{e:lass} in different ways, but one must be careful in designing an effective algorithm because the number of variables and the number of constraints grows proportionally to the number of data points, and can then be very large. We describe here one algorithm that is very simple, has guaranteed convergence without line searches, and takes advantage of the structure of the problem and the sparsity of \LL. It is based on the alternating direction method of multipliers (ADMM), combined with a direct linear solver using the Schur's complement and caching the Cholesky factorization of \LL.

\subsection{QP solution using ADMM}

We briefly review how to solve a QP using the alternating direction method of multipliers (ADMM), following \citep{Boyd_11a}. Consider the QP
\begin{align}
  \label{e:qp}
  \min_{\x} & \quad \frac{1}{2} \x^T \PP \x + \q^T\x \\
  \text{s.t.} & \quad \A\x = \b,\ \x \ge \0
\end{align}
over $\x \in \bbR^D$, where \PP\ is positive (semi)definite. In ADMM, we introduce new variables $\z \in \bbR^D$ so that we replace the inequalities with an indicator function $g(\z)$, which is zero in the nonnegative orthant $\z \ge \0$ and $\infty$ otherwise. Then we write the problem as
\begin{align}
  \label{e:qp-admm}
  \min_{\x} & \quad f(\x) + g(\z) \\
  \text{s.t.} & \quad \x = \z
\end{align}
where
\begin{equation*}
  f(\x) = \frac{1}{2} \x^T \PP \x + \q^T\x,\qquad \dom{f} = \{\x\in\bbR^D\mathpunct{:}\ \A\x = \b\}
\end{equation*}
is the original objective with its domain restricted to the equality constraint. The augmented Lagrangian is
\begin{equation}
  \label{e:auglag}
  \calL(\x,\z,\y;\rho) = f(\x) + g(\z) + \y^T (\x-\z) + \frac{\rho}{2} \norm{\x-\z}^2
\end{equation}
and the ADMM iteration has the form:
\begin{align*}
  \x &\leftarrow \arg\min_{\x}{\calL(\x,\z,\y;\rho)} \\
  \z &\leftarrow \arg\min_{\z}{\calL(\x,\z,\y;\rho)} \\
  \y &\leftarrow \y + \rho(\x - \z)
\end{align*}
where \y\ is the dual variable (the Lagrange multiplier estimates for the constraint $\x = \z$), and the updates are applied in order and modify the variables immediately. Here, we use the scaled form of the ADMM iteration, which is simpler. It is obtained by combining the linear and quadratic terms in $\x-\z$ and using a scaled dual variable $\uu = \y/\rho$:
\begin{align*}
  \x &\leftarrow \arg\min_{\x}{\left( f(\x) + \frac{\rho}{2} \norm{\x - \z + \uu}^2 \right)} \\
  \z &\leftarrow \arg\min_{\z}{\left( g(\z) + \frac{\rho}{2} \norm{\x - \z + \uu}^2 \right)} \\
  \uu &\leftarrow \uu + \x - \z.
\end{align*}
Since $g(\x)$ is the indicator function for the nonnegative orthant, the solution of the \z-update is simply to threshold each entry in $\x+\uu$ by taking is nonnegative part. Finally, the ADMM iteration is:
\begin{align}
  \label{e:admm}
  \x &\leftarrow \arg\min_{\x}{\left( f(\x) + \frac{\rho}{2} \norm{\x - \z + \uu}^2 \right)} \\
  \z &\leftarrow (\x + \uu)_+ \\
  \uu &\leftarrow \uu + \x - \z
\end{align}
where the updates are applied in order and modify the variables immediately, and $(t)_+ = \max(t,0)$ applies elementwise, and $\norm{\cdot}$ is the Euclidean norm. The penalty parameter $\rho > 0$ is set by the user, and $\z = (z_1,\dots,z_D)^T$ are the Lagrange multiplier estimates for the inequalities. The \x-update is an equality-constrained QP with KKT conditions
\begin{equation}
  \label{e:qp-kkt}
  \begin{pmatrix}
    \PP + \rho \I & \A^T \\ \A & \0
  \end{pmatrix}
  \begin{pmatrix}
    \x \\ \bnu
  \end{pmatrix}
  =
  \begin{pmatrix}
    -\q + \rho(\z-\uu) \\ \b
  \end{pmatrix}.
\end{equation}
Solving this linear system gives the optimal \x\ and \bnu\ (the Lagrange multipliers for the equality constraint). The ADMM iteration consists of very simple updates to the relevant variables, but its success crucially relies in being able to solve the \x-update efficiently. Given the structure of our problem, it is convenient to use a direct solution using Schur's complement, that is:
\begin{subequations}
  \label{e:qp-schur}
  \begin{align}
    \label{e:qp-schur1}
    (\A (\PP + \rho \I)^{-1} \A^T) \bnu &= \A (\PP + \rho \I)^{-1} (-\q + \rho(\z-\uu) ) + \b \\
    \label{e:qp-schur2}
    (\PP + \rho \I)^{-1} \x &= -\A^T \bnu -\q + \rho(\z-\uu).
  \end{align}
\end{subequations}
Eq.~\eqref{e:qp-schur1} results from left-multiplying the first equation in~\eqref{e:qp-kkt} by $\A (\PP + \rho \I)^{-1}$ and using the second equation in~\eqref{e:qp-kkt} to eliminate \x. Eq.~\eqref{e:qp-schur2} results from substituting \bnu\ back in the first equation in~\eqref{e:qp-kkt} and solving it for \x.

Thus, the ADMM iteration consists of solving a linear system on the primal variables \x, applying a thresholding to get \z, and an addition to get \uu. Convergence of the ADMM iteration~\eqref{e:admm} to the global minimum of problem~\eqref{e:qp} in value and to a feasible point is guaranteed for any $\rho > 0$.

\subsection{Application to our QP}

We now write the ADMM updates for our QP~\eqref{e:lass}, where we identify:
\begin{align*}
  \PP &= 2 \lambda \diag{\LL,\dots,\LL} \text{ of } NK \times NK \\
  \q &= -\vect{\G} \text{ of } NK \times 1 \\
  \A &= (\I_N,\dots,\I_N) \text{ of } N \times NK \\
  \b &= \1_N \text{ of } N \times 1
\end{align*}
where $\vect{\cdot}$ concatenates the columns of its argument into a single column vector. Given the structure of these matrices, the solution of the KKT system~\eqref{e:qp-kkt} by using Schur's complement~\eqref{e:qp-schur} simplifies considerably. The basic reasons are that (1) the matrix \PP\ is block-diagonal with $K$ identical copies of the graph Laplacian \LL, which is itself usually sparse; and (2) the especially simple form of the equality constraint matrix \A. Thus, even though the \x-update involves solving a large linear system of $NK$ equations, it is equivalent to solving $K$ systems of $N$ equations where the coefficient matrix is the same for each system and besides is constant and sparse, equal to $2\lambda\LL + \rho\I$. In turn, these linear systems may be solved efficiently in one of the two following ways: (1) preferably, by caching the Cholesky factorization of this matrix (using a good permutation to reduce fill-in), if it does not add so much fill that it can be stored; or (2) by using an iterative linear solver such as conjugate gradients, initialized with a warm start, preconditioned, and exiting it before convergence, so as to carry out faster, inexact \x-updates.

The final algorithm is as follows, with its variables written in matrix form. The input are the affinity matrices $\G_{N\times K}$ and $\W_{N\times N}$, from which we construct the graph Laplacian $\LL_{N\times N}$. We then choose $\rho > 0$ and set
\begin{equation*}
  \h = -\frac{1}{K} \G\1_K + \frac{\rho}{K} \1_N \qquad \RR\RR^T = 2\lambda\LL + \rho\I.
\end{equation*}
The Cholesky factor \RR\ is used to solve linear system~\eqref{e:lass-admm2}. We then iterate, in order, the following updates until convergence:
\begin{subequations}
  \label{e:lass-admm}
  \begin{align}
    \label{e:lass-admm1}
    \bnu &\leftarrow \frac{\rho}{K} (\Y - \U) \1_K - \h \\
    \label{e:lass-admm2}
    \Z &\leftarrow (2\lambda\LL + \rho\I)^{-1} ( \rho (\Y - \U) + \G - \bnu\1^T_K) \\
    \label{e:lass-admm3}
    \Y &\leftarrow (\Z + \U)_+ \\
    \label{e:lass-admm4}
    \U &\leftarrow \U + \Z - \Y
  \end{align}
\end{subequations}
where $\Z_{N\times K}$ are the primal variables, $\Y_{N\times K}$ the auxiliary variables, $\U_{N\times K}$ the Lagrange multiplier estimates for \Y, and $\bnu_{N\times 1}$ the Lagrange multipliers for the equality constraints. The solution for the linear system in the \Z-update may be obtained by using two triangular backsolves if using the Cholesky factor of $2\lambda\LL + \rho\I$, or using an iterative method such as conjugate gradients if the Cholesky factor is not available.

The iteration~\eqref{e:lass-admm} is very simple to implement. It requires no line searches and has only one user parameter, the penalty parameter. The algorithm converges for any positive value of the penalty parameter, but this value does affect the convergence rate.

\subsection{Remarks}

\begin{thm}
  \label{th:admm-kkt24}
  At each iterate in the algorithm updates~\eqref{e:lass-admm}, $\Z\1_K = \1_N$, $\U \le \0$, $\Y \ge \0$ and $\Y \circ \U = \0$.
\end{thm}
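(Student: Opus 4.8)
The plan is to verify each of the four claimed properties directly from the update equations~\eqref{e:lass-admm1}--\eqref{e:lass-admm4}, handling the two soft-thresholding facts together and treating the equality constraint separately.

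First I would dispatch the three claims involving \Y\ and \U. The property $\Y \ge \0$ is immediate from the \Y-update~\eqref{e:lass-admm3}, since $(\cdot)_+$ returns the nonnegative part elementwise. For $\U \le \0$ and $\Y \circ \U = \0$, I would combine~\eqref{e:lass-admm3} and~\eqref{e:lass-admm4}: writing $\B = \Z + \U$ for the matrix entering the \Y-update, we have $\Y = \B_+$ and the updated $\U = \B - \Y = \B - \B_+$. Working entry by entry, for a scalar $b$ one has $b_+ = \max(b,0) \ge 0$ and $b - b_+ = \min(b,0) \le 0$, and the product $\max(b,0)\min(b,0) = 0$ in every case (whether $b$ is positive, negative, or zero). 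Hence $\U \le \0$ and $\Y \circ \U = \0$ hold after the \U-update.

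The only nontrivial step is the equality constraint $\Z\1_K = \1_N$. I would right-multiply the \Z-update~\eqref{e:lass-admm2} by $\1_K$, using $\1^T_K \1_K = K$, to obtain $\Z\1_K = (2\lambda\LL + \rho\I)^{-1}\bigl(\rho(\Y-\U)\1_K + \G\1_K - K\bnu\bigr)$. Substituting the \bnu-update~\eqref{e:lass-admm1} in the form $K\bnu = \rho(\Y-\U)\1_K - K\h$ together with $K\h = -\G\1_K + \rho\1_N$, the terms $\rho(\Y-\U)\1_K$ and $\G\1_K$ cancel exactly, leaving $\Z\1_K = (2\lambda\LL + \rho\I)^{-1}(\rho\1_N)$. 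Finally, since \LL\ is a graph Laplacian, $\LL\1_N = \0$, so $(2\lambda\LL + \rho\I)\1_N = \rho\1_N$ and therefore $(2\lambda\LL + \rho\I)^{-1}(\rho\1_N) = \1_N$, giving $\Z\1_K = \1_N$.

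I expect this last computation to be the main obstacle, though only a mild one: its crux is that the \bnu-update was designed precisely so the constraint-violating terms cancel, which is simply the matrix specialization of the Schur-complement solution~\eqref{e:qp-schur} to $\A = (\I_N,\dots,\I_N)$, and the final cancellation is exact only because $\1_N$ lies in the null space of \LL. The other three properties are routine consequences of the elementwise positive-part operation. Since every pass executes all four updates in order, and the \U-update alters neither \Z\ nor \Y, the four properties hold jointly at the end of each iteration, which is the claim.
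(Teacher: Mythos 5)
Your proposal is correct and follows essentially the same route as the paper's proof: the simplex-sum identity is obtained by right-multiplying the $\Z$-update by $\1_K$, substituting the $\bnu$-update so the terms cancel, and using $\LL\1_N = \0$; the remaining three properties follow from writing $\Y = (\Z+\U)_+$ and the updated $\U = (\Z+\U)_-$ for the same argument $\Z+\U$. Your explicit entrywise check of $\max(b,0)\min(b,0)=0$ and the remark about the update ordering are slightly more detailed than the paper's, but the argument is the same.
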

\begin{proof}
  For \Z, substituting eq.~\eqref{e:lass-admm1} into~\eqref{e:lass-admm2}:
  \begin{equation*}
    \Z\1_K = (2\lambda\LL + \rho\I)^{-1} ( \rho (\Y - \U) + \G - \bnu\1^T_K) \1_K = (2\lambda\LL + \rho\I)^{-1} ( \rho (\Y - \U) \1_K + \G \1_K - K \bnu =(2\lambda\LL + \rho\I)^{-1} \rho \1_N = \1_N
  \end{equation*}
  where the last step results from $(2\lambda\LL + \rho\I) \1_N = 2\lambda\LL \1_N + \rho \1_N =\rho \1_N$. For \U, from eqs.~\eqref{e:lass-admm3}--\eqref{e:lass-admm4} we have that $\U \leftarrow \U+\Z - (\U+\Z)_+ = (\U+\Z)_- \le \0$. For \Y, $\Y \ge \0$ follows from $\Y = (\Z + \U)_+$ in~\eqref{e:lass-admm3}. Finally, $\Y \circ \U = \0$ follows from $\Y = (\Z + \U)_+$ and $\U = (\U+\Z)_-$.
\end{proof}
\begin{thm}
  Upon convergence of algorithm~\eqref{e:lass-admm}, \Z\ is a solution with Lagrange multipliers $\bpi = -\bnu$ and $\M = -\rho\U$.
\end{thm}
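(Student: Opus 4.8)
The plan is to verify that the fixed point of the iteration~\eqref{e:lass-admm} satisfies the five KKT conditions~\eqref{e:lass-kkt1}--\eqref{e:lass-kkt5}, which (as the excerpt notes) are necessary and sufficient for a solution of the convex QP; hence \Z\ is a global minimizer. Most of the conditions will come almost for free from Theorem~\ref{th:admm-kkt24}, so the only substantive work is the stationarity condition~\eqref{e:lass-kkt1}.

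First I would extract the fixed-point relations. At convergence the variables no longer change, so the \U-update~\eqref{e:lass-admm4} reads $\U = \U + \Z - \Y$, i.e.\ $\Z = \Y$. This single identity is the workhorse of the proof. From Theorem~\ref{th:admm-kkt24} we already have $\Z\1_K = \1_N$ (this is exactly~\eqref{e:lass-kkt2}), together with $\U \le \0$, $\Y \ge \0$, and $\Y \circ \U = \0$. Substituting $\Y = \Z$ and setting $\M = -\rho\U$ then immediately yields $\Z = \Y \ge \0$ (condition~\eqref{e:lass-kkt3}); $\M = -\rho\U \ge \0$ since $\rho > 0$ (condition~\eqref{e:lass-kkt4}); and $\M \circ \Z = -\rho(\U \circ \Z) = -\rho(\U \circ \Y) = \0$ (condition~\eqref{e:lass-kkt5}).

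For the stationarity condition I would start from the \Z-update~\eqref{e:lass-admm2}, left-multiply both sides by $(2\lambda\LL + \rho\I)$, and substitute the fixed-point identity $\Y = \Z$. This gives $2\lambda\LL\Z + \rho\Z = \rho(\Z - \U) + \G - \bnu\1^T_K$; the $\rho\Z$ terms cancel, leaving $2\lambda\LL\Z - \G = -\rho\U - \bnu\1^T_K$. Rearranging and inserting the claimed multipliers $\bpi = -\bnu$ and $\M = -\rho\U$ produces exactly $2\lambda\LL\Z - \G - \bpi\1^T_K - \M = \0$, which is condition~\eqref{e:lass-kkt1}.

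The step requiring the most care is this stationarity verification, though even there the only subtlety is the bookkeeping of the $\rho\Z$ cancellation and the sign conventions that make $\bpi = -\bnu$ and $\M = -\rho\U$ come out correctly; once $\Y = \Z$ is in hand the manipulation is purely algebraic. It is worth noting that the argument uses only the fixed-point equations and the invariants of Theorem~\ref{th:admm-kkt24}, not the convergence theory of ADMM itself, so the statement is best read as characterizing the limit \emph{assuming} convergence, which the general ADMM guarantee already provides for any $\rho > 0$.
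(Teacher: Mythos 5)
Your proposal is correct and follows essentially the same route as the paper: extract the fixed-point identity $\Z=\Y$ from the \U-update, pull conditions~\eqref{e:lass-kkt2}--\eqref{e:lass-kkt5} from Theorem~\ref{th:admm-kkt24}, and verify stationarity~\eqref{e:lass-kkt1} by clearing $(2\lambda\LL+\rho\I)$ in the \Z-update and cancelling the $\rho\Z$ terms. The only cosmetic difference is that you obtain the complementarity condition directly from the invariant $\Y\circ\U=\0$ rather than re-deriving it from $\Z=(\Z+\U)_+$ and $\U=(\Z+\U)_-$ as the paper does, which is an equally valid (arguably cleaner) shortcut.
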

\begin{proof}
  Let us compare the KKT conditions~\eqref{e:lass-kkt} with the algorithm updates~\eqref{e:lass-admm} upon convergence, i.e., at a fixed point of the update equations. From th.~\ref{th:admm-kkt24} we know that $\Z\1_K = \1_N$ and $\U \le \0 \Rightarrow \M = -\rho\U \ge \0$, which are KKT conditions~\eqref{e:lass-kkt2} and~\eqref{e:lass-kkt4}. From eq.~\eqref{e:lass-admm4} we must have $\Z = \Y$, so from eq.~\eqref{e:lass-admm3} we have $\Z = (\Z+\U)_+\ge \0$, which is KKT condition~\eqref{e:lass-kkt3}. From eqs.~\eqref{e:lass-admm3}--\eqref{e:lass-admm4} we have $\Z = (\Z+\U)_+$ and $\U = (\Z+\U)_-$, therefore $\U \circ \Z = \M \circ \Z = \0$, which is KKT condition~\eqref{e:lass-kkt5}. Finally, from eq.~\eqref{e:lass-admm2} we have:
  \begin{equation*}
    (2\lambda\LL + \rho\I) \Z = \rho (\Y - \U) + \G - \bnu\1^T_K \Rightarrow 2\lambda\LL\Z - \G + \rho\U + \bnu\1^T_K = 2\lambda\LL\Z - \G - \bpi\1^T_K - \M = \0
  \end{equation*}
  which matches KKT condition~\eqref{e:lass-kkt1}. The change of sign in the multipliers between the algorithm and the KKT conditions is due to the sign choice in the Lagrangian (adding in eq.~\eqref{e:auglag}, subtracting in~\eqref{e:lass-kkt}).
\end{proof}
\begin{rmk}
  In practice, the algorithm is stopped before convergence, and \Z, $\bpi = -\bnu$ and $\M = -\rho\U$ are estimates for a solution and its Lagrange multipliers, respectively. The estimate \Z\ may not be feasible, in particular the values $z_{nk}$ need not be in $[0,1]$, since this is only guaranteed upon convergence. If needed, a feasible point may be obtained by projecting each row of \Z\ onto the simplex (see section~\ref{s:oos}).
\end{rmk}
\begin{rmk}
  If $\G = \0$ (or $\lambda\rightarrow\infty$) one solution is given by $\Z = \frac{1}{K} \1_N\1^T_K$, for which the Lagrange multipliers are $\M=\0$ and $\bpi=\0$, thus the inequality constraints are inactive and the equality constraints are weakly active. Indeed, that \Z\ value is also a solution of the unconstrained problem.
\end{rmk}

\subsection{Computational complexity}
\label{s:cost}

Each step in~\eqref{e:lass-admm} is $\calO(NK)$ except for the linear system solution in~\eqref{e:lass-admm2}. If \LL\ is sparse, using the Cholesky factor makes this step $\calO(NK)$ as well, and adds a one-time setup cost of computing the Cholesky factor (which is also linear in $N$ with sufficiently sparse matrices). Thus, each iteration of the algorithm is cheap. In practice, for good values of $\rho$, the algorithm quickly approaches the solution in the first iterations and then converges slowly, as is known with ADMM algorithms in general. However, since each iteration is so cheap, we can run a large number of them if high accuracy is needed. As a sample runtime, for a problem with $N=10\,000$ items and $K=10$ categories (i.e., \Z\ has $10^5$ parameters) and using a $100$-nearest-neighbor graph, the Cholesky factorization takes $0.5$~s and each iteration takes $0.15$~s in a PC.

For large-scale problems, the slow convergence becomes more problematic, and it is possible that the Cholesky factor may create too much fill even with a good preordering. One can use instead an iterative linear solver, such as preconditioned conjugate gradients. Scaling up the training is a topic for future research.

\subsection{Initialization}

If the LASS problem is itself a subproblem in a larger problem (as in the Laplacian $K$-modes clustering algorithm; \citealp{WangCarreir14c}), one should warm-start the iteration of eq.~\eqref{e:lass-admm} from the values of \Y\ and \U\ in the previous outer-loop iteration. Otherwise, we can simply initialize $\Y = \U = \0$, which (substituting in eqs.~\eqref{e:lass-admm1}--\eqref{e:lass-admm2}) gives $\Z_0 = (2 \lambda \LL + \rho \I)^{-1} (\G - \frac{1}{K} \G\1_K\1^T_K) + \frac{1}{K} \1_N\1^T_K$ (where $\G - \frac{1}{K} \G\1_K\1^T_K$ is the matrix \G\ with centered rows, and $\frac{1}{K} \1_N\1^T_K$ is the simplex barycenter). This initialization is closely related to the projection on the simplex of the unconstrained optimum of the LASS problem, as we show next. Consider first the unconstrained minimization
\begin{equation*}
  \min_{\Z}{ f(\Z) = \lambda \trace{\Z^T \LL \Z}  - \trace{\G^T \Z} }.
\end{equation*}
This problem is in fact unbounded unless $\G^T\1 = \0$, because taking $\Z = \1\vv^T$ for any $\vv\in\bbR^K$, $\vv \neq \0$, since $\LL\1 = \0$, we have $f(\1\vv^T) = (\G^T\1)^T \vv$, which can be made arbitrarily negative. We could still try to define a \Z\ from the gradient $\nabla f(\Z)^T = 2\lambda \LL \Z - \G$, but this involves a linear system on \LL, whose computational cost defeats the purpose of the initialization. Instead, we can consider the unconstrained minimization
\begin{equation*}
  \min_{\Z} \frac{\rho}{2} \trace{\Z^T \Z} + \lambda \trace{\Z^T \LL \Z}  - \trace{\G^T \Z}
\end{equation*}
for $\rho > 0$, which is strongly convex and has a unique minimum $\Z^* = (2 \lambda \LL + \rho \I)^{-1} \G$, which we can compute cheaply if we reuse the Cholesky factor for $2 \lambda \LL + \rho \I$. Now, we can write the initialization $\Z_0$ (for $\Y = \U = \0$) in terms of $\Z^*$ as $\Z_0 = \Z^* - (\frac{1}{K} \Z^* \1_K) \1^T_K + \frac{1}{K} \1_N\1^T_K$, which means that each row vector of $\Z^*$ is translated along the direction $\1_K$. Since this direction is orthogonal to the simplex, both $\Z^*$ and $\Z_0$ have the same projection on it.

Finally, note that if $\rho$ is large, then $\Z_0 \approx \frac{1}{K} \1_N\1^T_K$ and $\Z^* \approx \0$, both of which project onto the simplex barycenter, independently of the problem data.

\subsection{Stopping criterion}

We stop when $\norm{\smash{\Z^{(\tau+\Delta)}-\Z^{(\tau)}}}_1 <$ \texttt{tol}, i.e., when the change in absolute terms in \Z\ in the last $\Delta$ iterations falls below a set tolerance \texttt{tol} (e.g.\ $10^{-5}$). Using an absolute criterion here is equivalent to using a relative one, since $\norm{\z_n}_1 = 1$, $n=1,\dots,N$. Since our iterations are so cheap, evaluating $\norm{\Z^{(\tau+\Delta)}-\Z^{(\tau)}}_1$ takes a runtime comparable to that of the updates in~\eqref{e:lass-admm} (except for the \Z-update, possibly), so testing the stopping criterion every $\Delta = 10$--$100$ iterations saves around $10$\% runtime.

Another possible stopping criterion is to test whether the KKT conditions~\eqref{e:lass-kkt} are satisfied up to a given tolerance, using the Lagrange multipliers' estimates provided by the algorithm at each iterate. Each iterate always satisfies~\eqref{e:lass-kkt2} and~\eqref{e:lass-kkt4}, so we only need to check~\eqref{e:lass-kkt1}, \eqref{e:lass-kkt3} and~\eqref{e:lass-kkt5} (if the iterate is interior to the inequalities it will also satisfy~\eqref{e:lass-kkt3} and~\eqref{e:lass-kkt5}). Still, it is faster to check for changes in \Z.

Since the iterates \Z\ in the algorithm need not be feasible, they may be slightly infeasible once the stopping criterion is satisfied. If desired, a feasible \Z\ can be obtained by projecting each assignment vector $\z_n$ onto the simplex (see section~\ref{s:oos}).

\subsection{Optimal penalty parameter $\rho$}

The speed at which ADMM converges depends on the quadratic penalty parameter $\rho$ \citep{Boyd_11a}. We illustrate this with the ``2 moons'' dataset in fig.~\ref{f:rho} ($N = 4\,000$ points, $K=2$ categories, $5$-nearest-neighbor graph, $\lambda=1$), where we set positive similarity values for one point in each cluster, resulting in each cluster being assigned to a different category, as expected. The problem has $8\,000$ parameters and we ran $10^4$ iterations, which took 11 s. Little work exists on how to select $\rho$ so as to achieve fastest convergence. Recently, for QPs, \citet{Ghadim_13a} suggest to use $\rho^*=2\lambda\sqrt{\sigma_{\min} \sigma_{\max}}$ where $\sigma_{\min}$ and $ \sigma_{\max}$ are the smallest (nonzero) and largest eigenvalue of the Laplacian. In fig.~\ref{f:rho}, $\rho^*\approx 0.2$, and we show the relative error $\smash{\norm{\Z-\smash{\Z_{\text{opt}}}}_{\text{F}}/\norm{\smash{\Z_{\text{opt}}}}_{\text{F}}}$ vs number of iterations for different $\rho/\rho^*$ (initial $\Z=\0$, with relative error $1$). Asymptotically, the convergence is linear; a model $\epsilon_k = r^k \epsilon_0$ gives $r \approx 0.9977$ for $\rho^*$. While, in the long term, values close to $\rho^*$ work best, in the short term, smaller values are able to achieve an acceptably low relative error ($\approx 10^{-3}$) in just a few iterations, so an adaptive $\rho$ would be best overall.

\begin{figure}[t]
  \centering
  \begin{tabular}{@{}c@{\hspace{0.08\linewidth}}c@{}}
    \begin{tabular}[c]{@{}c@{}}
      \includegraphics[width=0.31\linewidth]{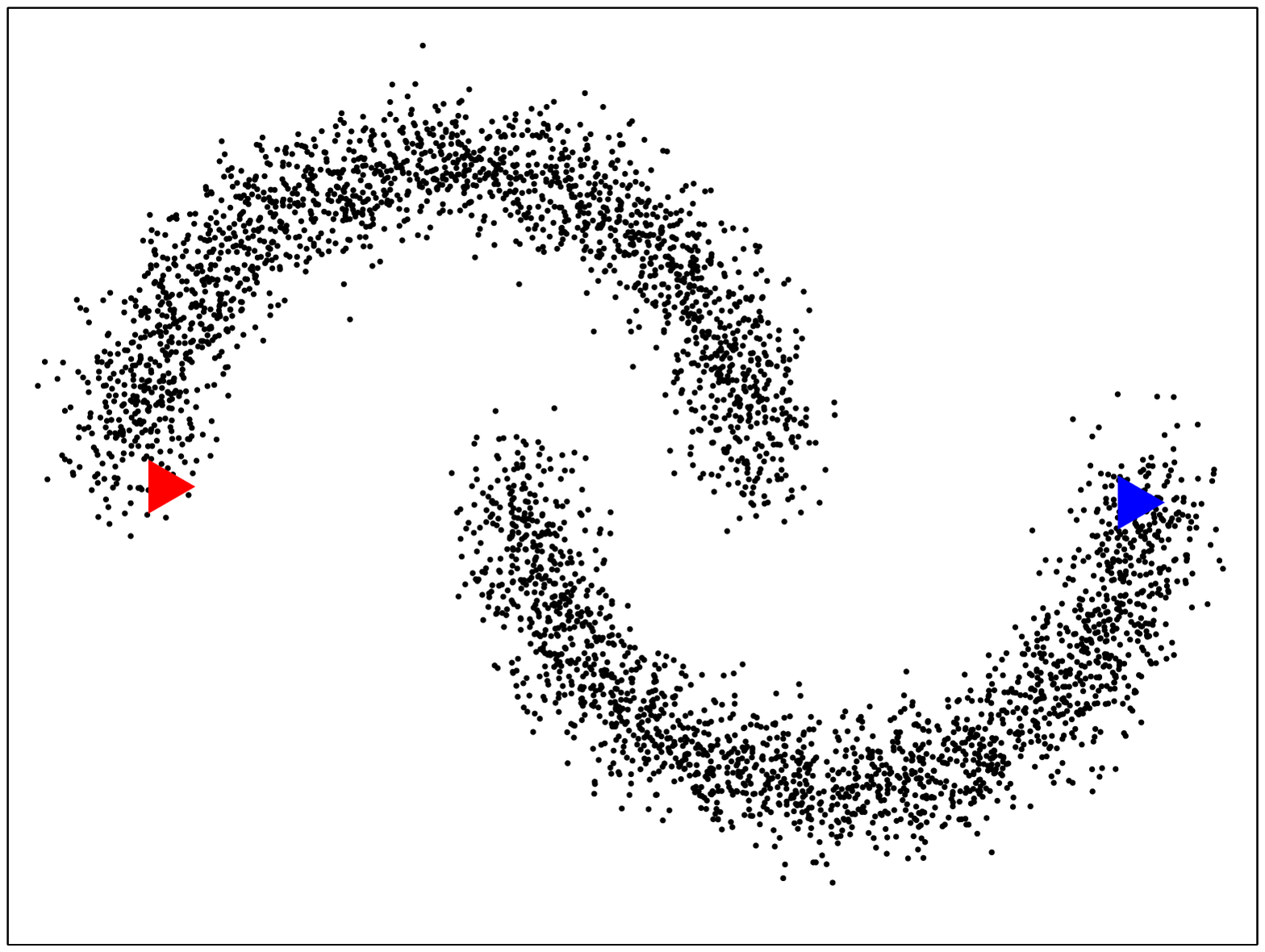} \\
      \includegraphics[width=0.31\linewidth]{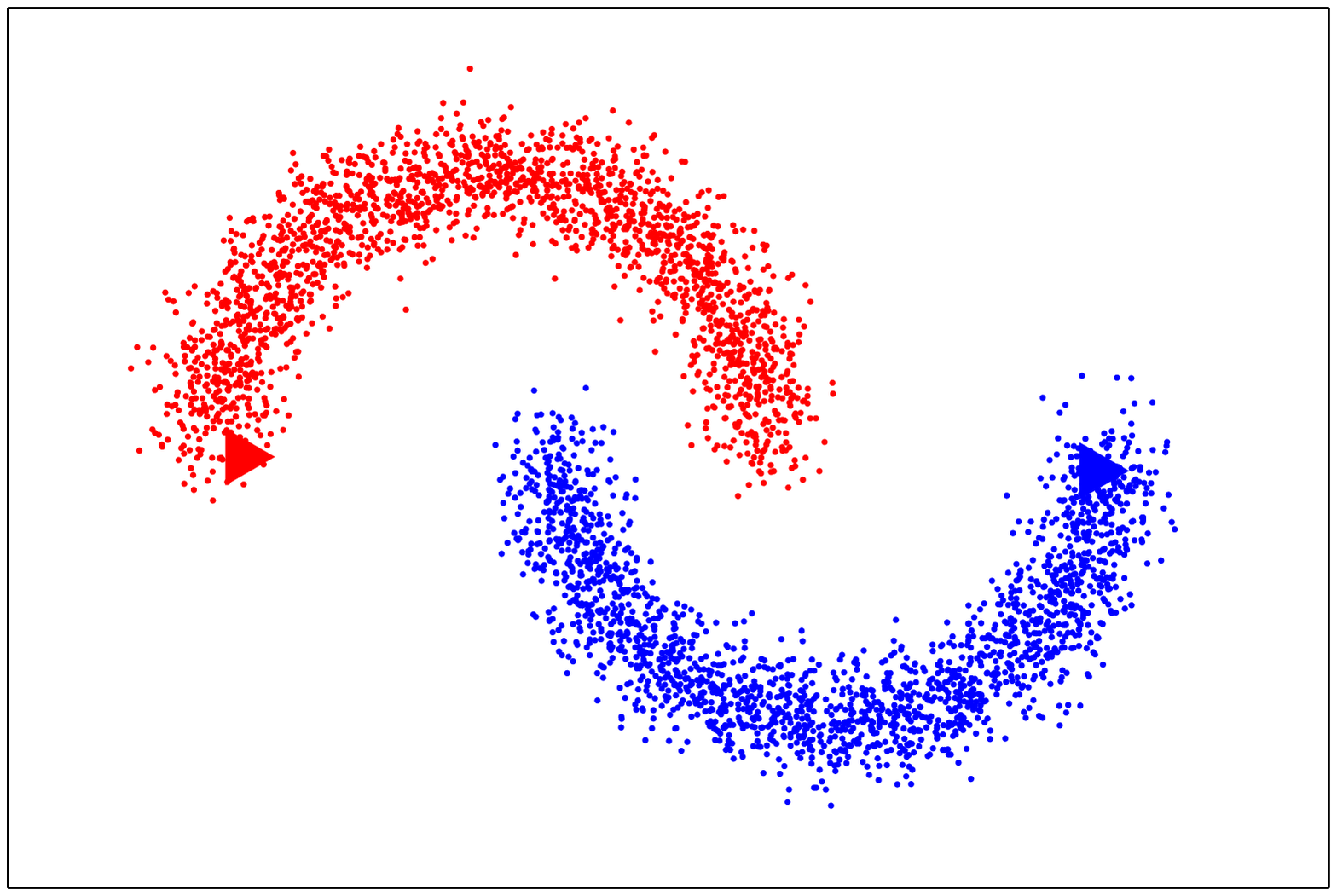}
    \end{tabular} &
    \begin{tabular}[c]{@{}c@{}}
      \psfrag{iter}[t][]{iterations $\times 10^3$}
      \psfrag{error}[][t]{relative error}
      \psfrag{rho=rrr1e-3}[l][l]{$10^{-3}$}
      \psfrag{rho=rrr1e-2}[l][l]{$10^{-2}$}
      \psfrag{rho=rrr1e-1}[l][l]{$10^{-1}$}
      \psfrag{rho=rrr1e0}[l][l]{$10^{0}$}
      \psfrag{rho=rrr1e1}[l][l]{$10^{1}$}
      \psfrag{rho=rrr1e2}[l][l]{$10^{2}$}
      \psfrag{rho=rrr1e3}[l][l]{$10^{3}$}
      \includegraphics[width=0.57\linewidth]{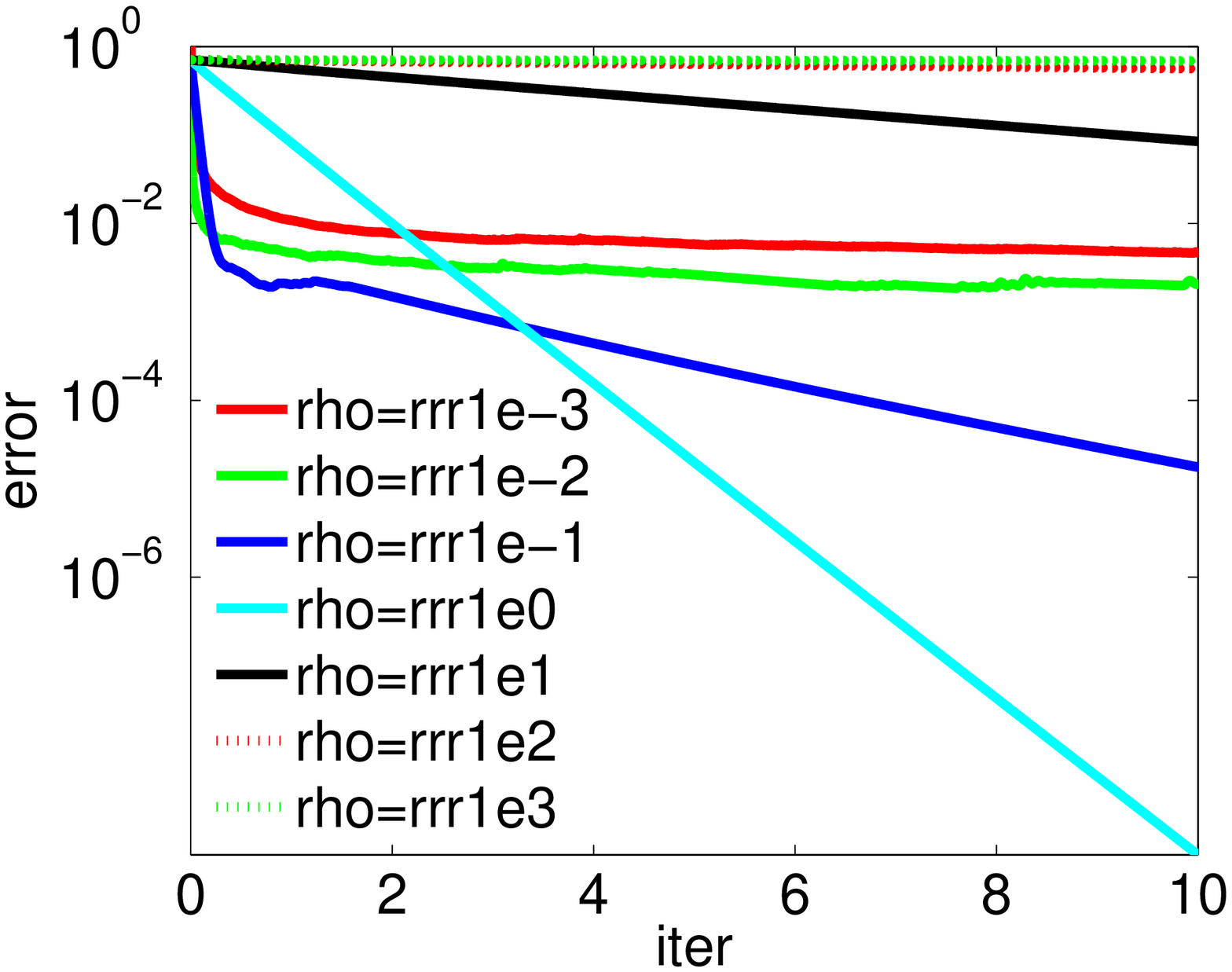}
    \end{tabular}
  \end{tabular}
  \caption{Convergence speed of ADMM for different $\rho$. \emph{Left}: a 2-moons dataset for which only two points are provided with \g\ affinities (above) and predicted assignments (below). \emph{Right}: error (in log scale) vs iterations for different $\rho/\rho^*$.}
  \label{f:rho}
\end{figure}

\subsection{Matlab code}
\label{s:matlab}

The following Matlab code implements the algorithm, assuming a direct solution of the \Z-update linear system.

\begin{verbatim}
function [Z,Y,U,nu] = lass(L,l,G,r,Y,U,maxit,tol)

[N,K] = size(G); LI = 2*l*L+r*speye(N,N); h = (-sum(G,2)+r)/K; Zold = zeros(N,K);
for i=1:maxit
  nu = (r/K)*sum(Y-U,2) - h;
  Z = LI \ bsxfun(@minus,r*(Y-U)+G,nu);
  Y = max(Z+U,0);
  U = U + Z - Y;
  if max(abs(Z(:)-Zold(:))) < tol break; end; Zold = Z;
end
\end{verbatim}

\section{Out-of-sample mapping}
\label{s:oos}

Having trained the system, that is, having found the optimal assignments \Z\ for the training set items, we are given a new, test item \x\ (for example, a new point $\x \in \bbR^D$), along with its item-item and item-category similarities $\w = (w_n)$, $n=1,\dots,N$ and $\g = (g_k)$, $k=1,\dots,K$, respectively, and we wish to find its assignment to each category. We follow the reasoning of \citet{CarreirLu07a} to derive an out-of-sample mapping. While one could train again the whole system augmented with \x, this would be very time-consuming, and the assignments of all points would change (although very slightly). A more practical and still natural way to define an out-of-sample mapping $\z(\x)$ is to solve a problem of the form~\eqref{e:lass} with a dataset consisting of the original training set augmented with \x, but keeping \Z\ fixed to the values obtained during training. Hence, the only free parameter is the assignment vector \z\ for the new point \x. After dropping constant terms, the optimization problem~\eqref{e:lass} reduces to the following quadratic program over $K$ variables:
\begin{subequations}
  \label{e:lass-oos}
  \begin{align}
    \label{e:lass-oos1}
    \min_{\z} & \quad \norm{\z - (\bar{\z} + \gamma\g)}^2 \\
    \label{e:lass-oos2}
    \text{s.t.} & \quad \z^T \1_K = 1 \\
    \label{e:lass-oos3}
    & \quad \z \ge \0
  \end{align}
\end{subequations}
where $\gamma = 1/2\lambda(\1^T_N\w) = 1/2\lambda \sum^N_{n=1}{w_n}$ and
\begin{equation*}
  \bar{\z} = \frac{\Z^T \w}{\1^T_N\w} = \sum^N_{n=1}{\frac{w_n}{\sum^N_{n'=1}{w_{n'}}} \z_n}
\end{equation*}
is a weighted average of the training points' assignments, and so $\bar{\z} + \gamma\g$ is itself an average between this and the item-category affinities. Thus, the solution is the Euclidean projection $\Pi(\bar{\z} + \gamma\g)$ of the $K$-dimensional vector $\bar{\z}+\gamma\g$ onto the probability simplex. This can be efficiently computed, in a finite number of steps, with a simple $\calO(K \log{K})$ algorithm \citep{Duchi_08a,WangCarreir14a}. Computationally, assuming \w\ is sparse, the most expensive step is finding the neighbors to construct \w. With large $N$, one should use some form of hashing \citep{Shakhn_06a} to retrieve approximate neighbors quickly.

The out-of-sample prediction for a point in the training set does not generally equal the \z\ value it received during training (although it does not differ much from it either). That is, $\z(\x_n) \neq \z_n$, where $\z(\x_n)$ uses the training data $\w = \W_{n\cdot}$ and $\g = \G_{n\cdot}$ for $\x_n$. This is also true of semisupervised learning, and it simply reflects the fact that the out-of-sample mapping smoothes, rather than interpolates, the training data.

Given a solution \Z\ of the LASS training problem, the out-of-sample mapping is uniquely defined, because the problem~\eqref{e:lass-oos} is strongly convex. However, as described in section~\ref{s:unicity}, in particular settings the solution of the LASS training problem may not be unique, and a natural question is: what is the relation between the out-of-sample mappings for two different solutions? From th.~\ref{th:lass-sol}, the solutions have the form $\Z + \1_N \p^T$ where \Z\ is any particular solution and $\p \in \bbR^K$ satisfies $\p^T \1_K = 0$, $\p^T (\G^T \1_N) = 0$ and $\Z + \1_N \p^T \ge \0$. Then the out-of-sample mapping for a \p-solution has the form
\begin{equation*}
  \z_{\p}(\x) = \Pi\left( \frac{(\Z + \1_N \p^T)^T \w}{\1^T_N\w} + \gamma \g \right) = \Pi(\vv(\x) + \p) \qquad \vv(\x) = \frac{\Z^T \w}{\1^T_N\w} + \gamma \g
\end{equation*}
where $\z_{\0}(\x) = \Pi(\vv(\x))$ is the out-of-sample mapping for the base solution \Z. If \p\ was parallel to the vector $\1_K$ then the out-of-sample mappings for different solutions but actually coincide, but in fact $\p^T \1_K = 0$, so the out-of-sample mappings for different solutions correspond to sliding $\z_{\0}(\x)$ along the simplex by vector \p\ (which must respect the remaining conditions above, of course).

As a function of $\lambda$, the out-of-sample mapping takes the following extreme values:
\begin{itemize}
\item If $\lambda = 0$ or $\w = \0$, $z_{k} = \delta(k-k_{\text{max}})$ where $k_{\text{max}} = \arg\max\{g_{k},\ k=1,\dots,K\}$, i.e., the item is assigned to its most similar similar category (or any mixture thereof in case of ties).
\item If $\lambda = \infty$ or $\g = \0$, $\z = \bar{\z}$, independently of \g. This corresponds to the SSL out-of-sample mapping.
\end{itemize}
In between these, the out-of-sample mapping as a function of $\lambda$ is a piecewise linear path in the simplex, which represents the tradeoff between the crowd (\w) and expert (\g) wisdoms. This path is quite different from the simple average of $\bar{\z}$ and \g\ (which need not even be feasible), and may produce exact 0s or 1s for some entries.

The LASS out-of-sample mapping offers an extra degree of flexibility to the user, which may be used on a case-by-case basis for each test item. The user has the prerogative to set $\lambda$ to favor more or less the expert vs the crowd opinion, and in fact to explore the entire continuum for $\lambda\in[0,\infty)$. The user can also explore what-if scenarios by changing \g\ itself, given the vector \w\ (e.g.\ how would the assignment vector look like if we think that test item \x\ belongs to category $k$ but not to category $k'$?). These computations are all relatively efficient because the bottleneck, which is the $\calO(N)$ computation of $\bar{\z}$, is done once only.

Note that the out-of-sample mapping is nonlinear and nonparametric, and it maps an input \x\ (given its affinity information) onto a valid assignment vector in the probability simplex. Hence, LASS can also be considered as \emph{learning nonparametric conditional distributions over the categories, given partial supervision}.

\section{Related work}

\subsection{Semisupervised learning with a Laplacian penalty (SSL)}

In semisupervised learning (SSL) with a Laplacian penalty \citep{Zhu_03a}, the basic idea is that we are given an affinity matrix \W\ and corresponding graph Laplacian $\LL = \D - \W$ on $N$ items, and the labels for a subset of the items. Then, the labels for the remaining, unlabeled items are such that they minimize the Laplacian penalty, or equivalently they are the smoothest function on the graph that satisfies the given labels (``harmonic'' function). Call $\Z_u$ of $N_u \times K$ and $\Z_l$ of $N_l \times K$ the matrices of labels for the unlabeled and labeled items, respectively, where $N = N_l + N_u$, and $\Z^T = (\Z^T_u\ \Z^T_l)$. To obtain $\Z_u$ we minimize $\trace{\Z^T\LL\Z}$ over $\Z_u$, with fixed $\Z_l$:
\begin{multline}
  \label{e:ssl}
  \min_{\Z_u}{ \trace{\Z^T\LL\Z} } = \min_{\Z_u}{ \trace{
      \begin{pmatrix}
        \Z_u \\ \Z_l
      \end{pmatrix}^T
      \begin{pmatrix}
        \LL_u & \LL_{ul} \\ \LL^T_{ul} & \LL_l
      \end{pmatrix}
      \begin{pmatrix}
        \Z_u \\ \Z_l
      \end{pmatrix}
    } } = \min_{\Z_u}{ \trace{ \Z^T_u \LL_u \Z_u + 2 \Z^T_l \LL^T_{ul} \Z_u } + \text{constant} } \\
  \Rightarrow \Z_u = - \LL^{-1}_u \LL_{ul} \Z_l = \LL^{-1}_u \W_{ul} \Z_l.
\end{multline}
Thus, computationally the solution involves a sparse linear system of $N_u \times N_u$. An out-of-sample mapping for a new test item \x\ with affinity vector \w\ wrt the the training set can be derived by SSL again, taking $\Z_l$ of $N \times K$ as all the trained labels (given and predicted) and $\Z_u = \z^T$ as the free label. This gives a closed-form expression
\begin{equation}
  \label{e:ssl-oos}
  \z(\x) = \sum^N_{n=1}{\frac{w_n}{\sum^N_{n'=1}{w_{n'}}} \z_n}
\end{equation}
which is the average of the labels of \x's neighbors, making clear the smoothing behavior of the Laplacian. SSL with a Laplacian penalty is very effective in problems where there are very few labels, i.e., $N_u \ll N_l$, but the graph structure is highly predictive of each item's labels. Essentially, the given labels are propagated throughout the graph.

In our setting, the labels are the item-category assignments $z_{nk}$, and we have the following result.
\begin{thm}
  \label{th:ssl-assignments}
  In problem~\eqref{e:ssl}, if $\Z_l \ge \0$ and $\Z_l \1_{N_l} = \1_K$ then $\Z_u \ge \0$ and $\Z_u \1_{N_u} = \1_K$.
\end{thm}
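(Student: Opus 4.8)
The plan is to work directly from the closed-form harmonic solution $\Z_u = -\LL_u^{-1}\LL_{ul}\Z_l = \LL_u^{-1}\W_{ul}\Z_l$ derived in~\eqref{e:ssl}, and to show that the matrix $\SS := \LL_u^{-1}\W_{ul}$ mapping the given labels to the predicted ones is \emph{row-stochastic}, i.e.\ $\SS \ge \0$ and $\SS\1_{N_l} = \1_{N_u}$. Both conclusions then follow at once: nonnegativity, because $\Z_u = \SS\Z_l$ is a product of two nonnegative matrices; and the simplex constraint, because $\Z_u\1_K = \SS(\Z_l\1_K) = \SS\1_{N_l} = \1_{N_u}$, using the hypothesis that each row of $\Z_l$ sums to one. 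So the entire proof reduces to the two stochasticity properties of $\SS$.

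For the row sums, I would first establish the identity $\LL_u\1_{N_u} = \W_{ul}\1_{N_l}$. This is a direct consequence of the structure $\LL = \D - \W$ together with $\LL_{ul} = -\W_{ul}$: for an unlabeled node $i$, its degree $d_i = \sum_{j\in u} w_{ij} + \sum_{j\in l} w_{ij}$ splits over unlabeled and labeled neighbors, so $(\LL_u\1_{N_u})_i = d_i - \sum_{j\in u} w_{ij} = \sum_{j\in l} w_{ij} = (\W_{ul}\1_{N_l})_i$. Left-multiplying by $\LL_u^{-1}$ gives $\1_{N_u} = \LL_u^{-1}\W_{ul}\1_{N_l} = \SS\1_{N_l}$, exactly the claimed row-sum property.

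For nonnegativity, since $\W_{ul}\ge\0$ it suffices to prove $\LL_u^{-1}\ge\0$ entrywise. The key observation is that $\LL_u$ is a \emph{Stieltjes matrix}: it is symmetric, its off-diagonal entries $-w_{ij}$ are nonpositive, and it is positive definite because it is a proper principal submatrix of the Laplacian of a connected graph (deleting at least one node from such a Laplacian produces a positive-definite ``grounded'' Laplacian). A symmetric positive-definite matrix with nonpositive off-diagonals is a nonsingular M-matrix, and such matrices have entrywise-nonnegative inverses; hence $\LL_u^{-1}\ge\0$, and therefore $\SS = \LL_u^{-1}\W_{ul}\ge\0$.

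The step requiring the most care is the nonnegativity of $\LL_u^{-1}$, since this is where the combinatorial structure of the graph genuinely enters. The cleanest route invokes the Stieltjes/M-matrix characterization, but one must first justify that $\LL_u$ is positive definite (equivalently, invertible, which is already implicitly assumed when writing $\LL_u^{-1}$); this holds provided each connected component contains at least one labeled item, which under the connected-graph assumption is automatic as soon as $N_l \ge 1$. An alternative, more self-contained argument avoiding the M-matrix theorem is a discrete maximum-principle/probabilistic one: $\SS$ is the matrix of absorption probabilities of the random walk on the graph stopped on the labeled set, whose entries are manifestly nonnegative and whose rows sum to one, delivering both properties simultaneously. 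I would present the Stieltjes argument as the main line and note the random-walk interpretation as a remark.
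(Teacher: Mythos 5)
Your proposal is correct. The row-sum half is essentially identical to the paper's: both derive $\LL_u\1_{N_u}+\LL_{ul}\1_{N_l}=\0$ from the zero row sums of $\LL$ and then push $\1_K$ through the closed form $\Z_u=-\LL_u^{-1}\LL_{ul}\Z_l$. Where you genuinely diverge is the nonnegativity half. The paper disposes of it in one line by invoking the maximum principle for harmonic functions (citing Doyle and Snell): each unknown entry lies between the minimum and maximum of the given labels, hence in $[0,1]$. You instead prove entrywise nonnegativity of $\SS=\LL_u^{-1}\W_{ul}$ algebraically, via the observation that $\LL_u$ is a Stieltjes matrix (symmetric, positive definite as a grounded Laplacian, nonpositive off-diagonals), hence a nonsingular M-matrix with $\LL_u^{-1}\ge\0$; you then note the random-walk/maximum-principle route only as a remark, which is exactly the paper's main line. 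Your version is more self-contained as linear algebra (modulo the standard M-matrix inverse-positivity theorem) and has the virtue of making explicit the hypothesis that the paper leaves implicit, namely that $\LL_u$ is invertible, which requires every connected component to contain at least one labeled node. The paper's maximum-principle argument is shorter and delivers slightly more: it bounds the entries above by $\max_{n,k}(z_{l})_{nk}\le 1$ as well as below by $0$, whereas your M-matrix route gives only the lower bound directly (the upper bound then follows from the row sums anyway). One trivial aside: the theorem as printed writes $\Z_l\1_{N_l}=\1_K$, which is dimensionally a typo for $\Z_l\1_K=\1_{N_l}$; you correctly worked with the intended row-stochastic reading.
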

\begin{proof}
  Since $\LL = \D - \W$ we have
  \begin{equation*}
    \0 = \LL \1_{N_u+N_l} =
    \begin{pmatrix}
      \LL_u & \LL_{ul} \\ \LL^T_{ul} & \LL_l
    \end{pmatrix}
      \begin{pmatrix}
        \1_{N_u} \\ \1_{N_l}
      \end{pmatrix} =
    \begin{pmatrix}
      \LL_u \1_{N_u} + \LL_{ul} \1_{N_l} \\ \LL^T_{ul} \1_{N_u} + \LL_l \1_{N_l}
    \end{pmatrix}
    \Rightarrow - \LL^{-1}_u \LL_{ul} \1_{N_l} = \1_{N_u}.
  \end{equation*}
  Hence $\Z_u \1_K = - \LL^{-1}_u \LL_{ul} \Z_l \1_K = - \LL^{-1}_u \LL_{ul} \1_{N_l} = \1_{N_u}$. That $\Z_u \ge \0$ follows from the maximum principle for harmonic functions \citep{DoyleSnell84a}: each of the unknowns must lie between the minimum and maximum label values, i.e., in $[0,1]$. (Strictly, they will lie in $(0,1)$ or be all equal to a constant.)
\end{proof}
Thus, in the special case where the given labels are valid assignments (nonnegative with unit sum), the predicted labels will also be valid assignments, and we need not subject the problem explicitly to simplex constraints, which simplifies it computationally. This occurs in the standard semisupervised classification setting where each item belongs to only one category and we use the $\z_n$ vectors to implement a $1$-of-$K$ coding (e.g.\ as used for supervised clustering in \citealp{Grady06a}). However, \emph{in general SSL does not produce valid assignments}, e.g.\ if the given labels are not valid assignments, or in other widely used variations of SSL, such as using class mass normalization \citep{Zhu_03a}, or using the normalized graph Laplacian instead of the unnormalized one, or using label penalties \citep{Zhou_04a}. In the latter case (also similar to the ``dongle'' variation of SSL; \citealp{Zhu_03a}), one minimizes the Laplacian penalty plus a term equal to the squared distance of the labeled points (considered free parameters as well) to the labels $\Z_l$ provided. Thus, this penalizes the labeled points from deviating from their intended labels, rather than forcing them to equal them. This was extended by \citet{SubramBilmes11a} (replacing squared losses with Kullback-Leibler divergences and adding an additional entropy term) to learning probability distributions, i.e., where the labels $\Z_l$ are entire distributions over the $K$ classes, with each item-class probability specified exactly. All these approaches rely on the following: \emph{they use provided, specific label values $\Z_l$ as targets to be (ideally) met by the parameters}.

\paragraph{Relation with LASS}

LASS and SSL are similar in that (1) \LL\ plays the same role, i.e., to propagate label information in a smooth way according to the item-item graph; and (2) both rely on some given data to learn \Z: the similarity matrix \G\ in LASS and the given labels $\Z_l$ in SSL. LASS and SSL differ as follows. (1) The use of the simplex constraints, necessary to ensure valid assignments, which also means all the assignment values in LASS are interdependent, unlike in the classical SSL, where the prediction for each category can be solved independently. (2) A fundamental difference is in the supervision provided. If in LASS we were given actual labels $\Z_l$ for some of the items, we would simply use them just as in SSL, and the LASS problem with $\Z_l$ having given assignments would be:
\begin{subequations}
  \label{e:lass-ssl}
  \begin{align}
    \label{e:lass-ssl-objfcn}
    \min_{\Z_u} & \quad \lambda \trace{ \Z^T_u \LL_u \Z_u + 2 \Z^T_l \LL^T_{ul} \Z_u } - \trace{\G^T_u \Z_u} \\
    \label{e:lass-ssl-c1}
    \text{s.t.} & \quad \Z_u \1_K = \1_{N_u} \\
    \label{e:lass-ssl-c2}
    & \quad \Z_u \ge \0.
  \end{align}
\end{subequations}
It is clear that setting $\G = \0$ there gives the SSL problem~\eqref{e:ssl} if $\Z_l$ contains valid assignments (so the constraints are redundant, from th.~\ref{th:ssl-assignments}). Thus, SSL is a particular case of LASS, not a different way of encoding the same input data.
However, the \G\ term provides soft, partial ``labels'', and this information differs from (hard) labels $\Z_l$. Indeed, when the label to be learned for each item is an assignment vector, the concept of ``labeling'' breaks down, for two reasons. First, if the number of categories $K$ is not very small and an item $n$ has nonzero assignments to multiple categories, in practice it is hard for a user to have to give a value (or tag) for every single relevant category. Giving partial information is much easier, by simply setting $g_{nk} = 1$ for the most relevant categories, possibly setting $g_{nk} = -1$ for a few categories, and setting $g_{nk} = 0$ for the rest (we stick to $\pm 1$ and $0$ similarities for simplicity). Second, because the assignment values are constrained to be in the simplex, \emph{we cannot give actual values for individual entries} (unless we give the entire assignment vector). For example, setting an entry to 1 implicitly forces the other entries to 0. In summary, \emph{the semantics of the item-category similarities in LASS is that, where nonzero, they encourage the corresponding assignment towards relatively high or low values (for positive and negative similarities, respectively), and where zero, they reflect ignorance and are non-committing}, something which is close to a user's intuition, but generally difficult to achieve by setting assignment values directly.

Not being able to commit to specific assignment values, especially where $g_{nk} = 0$, also implies that it is not possible to transform meaningfully a given item-category sparse similarity matrix \G\ into an assignment vector. Given a sparse matrix \G, if we insist in setting full assignments for each item $n$ having a nonzero vector $\g_n$ (so we can use these with SSL), perhaps the best one can do is to follow this labeling procedure: for each $k$, set $z_{nk} = 1$, $\epsilon$ or $0$ if $g_{nk} = 1$, $0$ or $-1$, respectively, and normalize $\z_n$ (where $1 \gg \epsilon \ge 0$ is a smoothing user parameter). This forces a zero assignment for each negative-similarity category, and distributes the unit assignment mass over the remaining categories. Obviously, this likely forces many $z_{nk}$ to wrong values and, as we show in the experiments, works poorly. (Another approach would be to set only the $z_{nk}$ entries for which $g_{nk} \neq 0$ and leave the others free (subject to the constraints~\eqref{e:lass-c1}--\eqref{e:lass-c2}), but this is essentially with LASS does.)

The difference between SSL and LASS is clearly seen in the out-of-sample mapping. In SSL, the information provided for a test item is just the vector \w\ of similarities to other items, and the SSL out-of-sample mapping coincides with $\bar{\z}$ in the LASS out-of-sample mapping, i.e., the average of its neighbors' assignments. With LASS, in addition to \w\ we also give the vector \g\ of similarities to categories. If $\g=\0$, the predictions of LASS and SSL coincide. Otherwise, LASS trades off both \w\ and \g. This is particularly important when \w\ is not very informative, e.g.\ if it has many nonzero entries of similar magnitude, or all entries are very small (an outlying or ``new'' item).

A further difference between SSL and LASS is that in SSL the learned assignments $z_{nk}$ are never exactly $0$ or $1$ (in nontrivial problems) because each $\z_n$ is the average of its neighbors' assignments. In LASS, the $z_{nk}$ values can be exactly $0$ or $1$. This happens for the same reason why in many statistical models the $L_1$ norm achieves sparse solutions while the $L_2$ norm does not (indeed, the simplex constraints~\eqref{e:lass-c1}--\eqref{e:lass-c2} imply $\smash{\norm{\z_n}}_1 = 1$ for each $n$).

\subsection{Assignments and probabilities}

The semantics of assignments is different from that of probabilities. Given a discrete probability distribution $p$ over the values in $\Omega = \{1,\dots,K\}$, saying that ``$p(Z = k) = z_{k}$'' means that $Z$ equals exactly one of the values in $\Omega$, but that we observe the value $k$ with a probability $z_{k}$ (frequentist or Bayesian), for each $k \in \Omega$. In contrast, given an assignment vector \z\ over the values in $\Omega$, the value of $z_k$ corresponds to the proportion of \z\ that belongs to category $k$. For example, in a portfolio model, $k=1,\dots,K$ are possible investments and $z_k$ is the portion of a \$1 capital that is allocated to investment $k$, not the probability that we allocate all the \$1 to investment $k$. However, in some cases the assignments can indeed be used as proxies for probabilities. For example, our model includes classification as a particular (and rather restricted) setting, where (most) items belong to a single category and categories are mutually exclusive. Here, the assignment vectors may be interpreted as probabilities and the labels safely provided as $1$--of--$K$ codes.

Assignment problems have a long history in operations research and economics. For example, the Markowitz portfolio model \citep{NocedalWright06a} seeks the portfolio (soft assignment of an individual investor's \$1 capital to $K$ investments) that maximizes the expected return and minimizes the variance, and has the form of a QP. However, the use of Laplacian penalties (which, in the Markowitz model, would describe the similarity between different investors) does not seem to have been applied there.

\subsection{Other applications of LASS}

The LASS problem~\eqref{e:lass} appears in the assignment step of the training algorithm for Laplacian $K$-modes clustering \citep{WangCarreir14c}. Here, the items are data points $\x_1,\dots,\x_N \in \bbR^D$ to be clustered into $K$ clusters and the categories are the cluster centroids $\c_1,\dots,\c_K \in \bbR^D$. Both affinities are Gaussian: the \W\ matrix has entries $w_{nm} = G(\norm{(\x_n-\x_m)/\sigma}^2)$, $n,m=1,\dots,N$, and the \G\ matrix has entries $g_{nk} = G(\norm{(\x_n-\c_k)/\sigma}^2)$, $n=1,\dots,N$, $k=1,\dots,K$, where $G(\cdot^2)$ gives a Gaussian kernel. At an optimum Laplacian $K$-modes clustering, the cluster centroids $\C= (\c_1,\dots,\c_K)$ are the modes of the kernel density estimates defined by each of the $K$ clusters' points. The problem is optimized over \Z\ and \C\ in alternating steps, where the step over \Z\ for fixed \C\ has the form~\eqref{e:lass}.

Supervised clustering can be formulated as problem~\eqref{e:lass}, in particular supervised image segmentation. The user marks several pixels for each segment of the image, from which we can define the similarity matrix \G\ in various ways. The similarity matrix \W\ is constructed from the image (pixel location and range features).

\section{Experiments}
\label{s:expts}

\begin{figure}[t]
  \centering
  \begin{tabular}{cc}
    \psfrag{C1}{C1}
    \psfrag{C2}{C2}
    \psfrag{C3}{C3}
    \psfrag{C4}{C4}
    \psfrag{a}{$a$}
    \psfrag{b}{$b$}
    \psfrag{c}{$c$}
    \psfrag{d}{$d$}
    \psfrag{e}{$e$}
    \psfrag{f}{$f$}
    \psfrag{g}{$g$}
    \psfrag{a'}{$a'$}
    \psfrag{b'}{$b'$}
    \psfrag{c'}{$c'$}
    \psfrag{d'}{$d'$}
    \psfrag{e'}{$e'$}
    \psfrag{f'}{$f'$}
    \psfrag{g'}{$g'$}
    \raisebox{-14ex}{\includegraphics[width=0.45\linewidth]{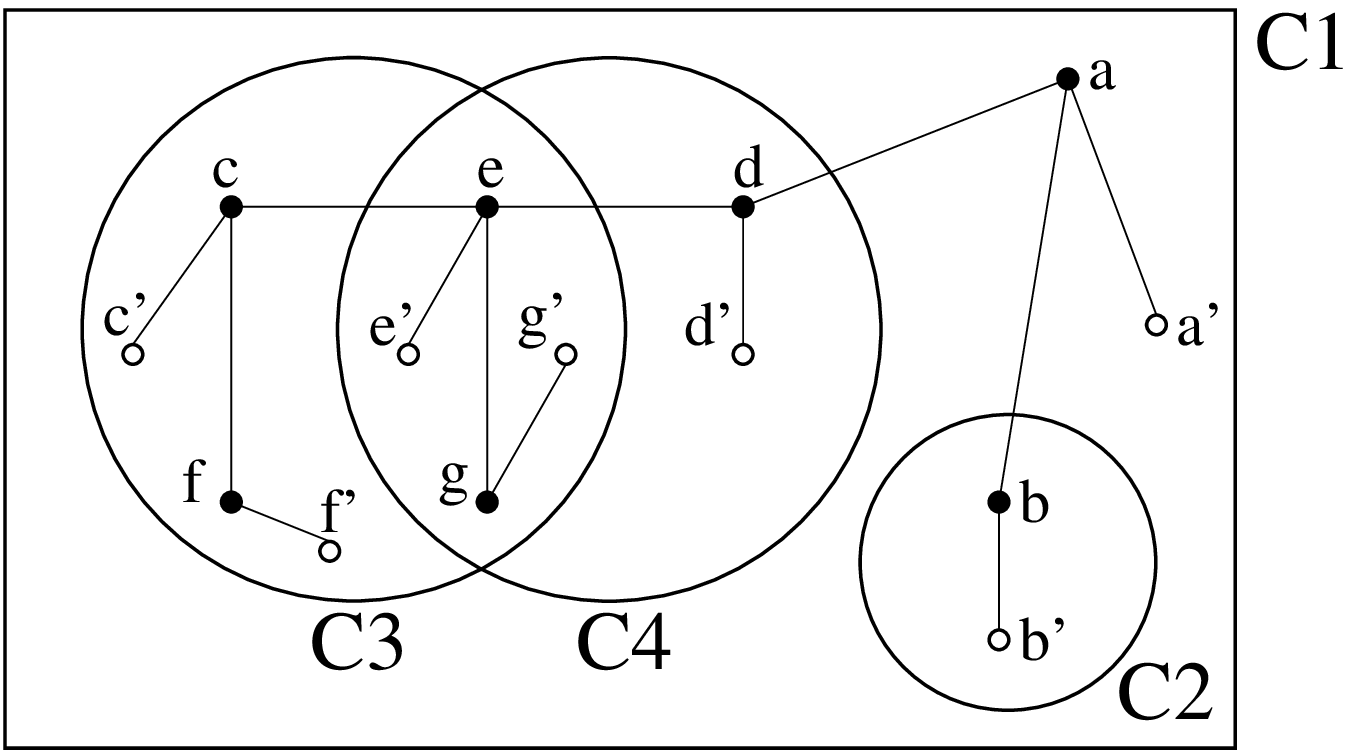}} &
    \begin{tabular}{c|c|c|c}
      \hline
      data & \caja{c}{c}{ground \\ truth} & \caja{c}{c}{positive \\ labels} & \caja{c}{c}{negative \\ labels} \\
      \hline
      $a$ & 1     & 1   & \mbox{not} 2 \\
      $b$ & 1,2   & 2   & \\
      $c$ & 1,3   & 1   & \\
      $d$ & 1,4   & 4   & \mbox{not} 3 \\ 
      $e$ & 1,3,4 & 1,3 & \\
      $f$ & 1,3   & 3   & \\
      $g$ & 1,3,4 & 3,4 & \\
      \hline
    \end{tabular}
  \end{tabular} \\[2ex]
  \begin{tabular}{@{}ccc@{}}
    SSL & LASS (positive labels only) & LASS (positive \& negative labels) \\
    \begin{tabular}{c|cccc}
      \hline
      & C1  & C2 &C3 &C4 \\
      \hline
      $a'$ & 1  & 0  & 0  & 0 \\
      $b'$ & 0  & 1  & 0  & 0 \\
      $c'$ & 1  & 0  & 0  & 0 \\
      $d'$ & 0   & 0 & 0  & 1 \\
      $e'$ & 0.5 & 0 & 0.5& 0 \\
      $f'$ & 0   & 0 & 1  & 0 \\
      $g'$ & 0   & 0 & 0.5& 0.5 \\
      \hline
    \end{tabular}&
    \begin{tabular}{c|cccc}
      \hline
      & C1  & C2 &C3 &C4 \\
      \hline
      $a'$ & 0.57& 0.28  & 0.10  & 0.05 \\
      $b'$ & 0.40  & 0.60  & 0  & 0 \\
      $c'$ & 0.30  & 0  & 0.70  & 0 \\
      $d'$ & 0.36  & 0.08 & 0.33  & 0.23 \\
      $e'$ & 0.29 & 0 & 0.68& 0.03 \\
      $f'$ & 0.05   & 0 & 0.95  & 0 \\
      $g'$ & 0   & 0 & 0.82& 0.18 \\
      \hline
    \end{tabular}&
    \begin{tabular}{c|cccc}
      \hline
      & C1  & C2 &C3 &C4 \\
      \hline
      $a'$ & 0.90  & 0  & 0  & 0.10 \\
      $b'$ & 0.70  & 0.30  & 0  & 0 \\
      $c'$ & 0.60  & 0  & 0.40  & 0 \\
      $d'$ & 0.70   & 0 & 0  & 0.30 \\
      $e'$ & 0.57 & 0 & 0.37 & 0.07 \\
      $f'$ & 0.35   & 0 & 0.65  & 0 \\
      $g'$ & 0.24   & 0 & 0.53 & 0.23 \\
      \hline
    \end{tabular}
  \end{tabular}
  \caption{Illustrative example. \emph{Top}: Venn diagram of the categories and ground-truth labels for the labeled items. \emph{Bottom}: assignments found by SSL and LASS (using only positive labels, and both positive and negative labels).}
  \label{f:toy}
\end{figure}

\paragraph{Illustrative example}

We constructed a simple example to show the difference between LASS and SSL, and the role of positive and negative item-category similarities. The data consists of $N = 14$ points $a$--$g$ and $a'$--$g'$ and $K = 4$ categories C1--C4 which are related to each other according to the Venn diagram in fig.~\ref{f:toy}. Category C1 contains the other three categories and C3 and C4 intersect. This is quite different from the usual classification setting where each point is assigned to only one category. A graph is built on the data set, denoted by the edges between data points. We consider $a$--$g$ (filled circles) as partially labeled and $a'$--$g'$ (hollow circles) as completely unlabeled. The ground-truth (true assignments derived from the Venn diagram), positive and negative labels we use are given in fig.~\ref{f:toy}(top right). For LASS, we set $g_{nk} = \pm 1$ if $\x_n$ is given positive/negative label C$k$, and set $g_{nk} = 0$ if no information is given. We do not give the full label information for any point (except $a$), so it is crucial to make use of the graph Laplacian to propagate label information. Since each unlabeled point $\x_i$ is only connected to one (partially) labeled point $\x_j$, $\x_i$ will inherit exactly the same assignment from $\x_j$ in order to minimize the objective function (in this case $\smash{w_{ij}\norm{\z_i-\z_j}^2} = 0$). Fig.~\ref{f:toy}(bottom) shows the assignment of unlabeled points obtained from SSL and from LASS ($\lambda=1$) with only positive labels, and both positive and negative labels in \G. With SSL \citep{Zhu_03a}, since the labels are used as constraints (eq.~\eqref{e:ssl}), the unlabeled points just inherit them and miss many assignments. With LASS, using only positive \G\ obtains smoother assignments, but it sometimes assigns points to categories they do not belong to (e.g.\ $a'$ to C2, $d'$ to C3). Once we use negative labels, the progagation of those wrong labels is cut off and we obtain assignments closer to the ground truth for all points. SSL cannot easily use the negative labels.

\paragraph{Digit recognition}

We test LASS in a classification task where each data point has only one valid label. We randomly sample $10\,000$ MNIST digit images and compute the $10$-nearest-neighbor graph with similarities based on the Euclidean distance between images. We then randomly select $N_l$ images from each of the 0--9 categories and give them the correct label. We compare with a nearest neighbor classifier (NN), one-vs-all kernel support vector machine (KSVM) using RBF kernel of width $\sigma=5$ and hinge loss penalty parameter $C$ selected from $\{10^{-3},10^{-2},10^{-1},1,10^1,10^2,10^3\}$ and SSL \citep{Zhu_03a,Grady06a} using 1-out-of-10 coding for the labeled points. We also include two variants of SSL: SSL1 \citep{Zhu_03a} normalizes the assignments from SSL so that the prior distribution of the different classes is respected. SSL2 uses the normalized graph Laplacian instead of the unnormalized one in SSL \citep{Zhou_04a}. SSL1 and SSL2 improve over SSL but neither of them produce valid assignments (they do not lie on the probability simplex). For LASS and SSL/1/2 we assign each point to the category with the highest prediction value. We let all algorithms use their respective optimal parameters (e.g.\ $\lambda$ in LASS is determined by a grid search).

Fig.~\ref{f:mnist-20news}(left) shows the classification error over $20$ different labeled/unlabeled partitions of the dataset as a function of $N_l$ (errorbars not shown to avoid clutter). The accuracy of all algorithms improves as the number of labeled points increase, particularly for NN and KSVM, which are template matchers. But when only few points are labeled, the methods that make use of Laplacian smoothing significantly outperform them. LASS (runtime: 40 s) consistently achieves the best accuracy while producing valid assignments.

\paragraph{Document categorization}

We want to predict assignments of documents to topics, where each document may belong to multiple topics, in the 20-newsgroups dataset ($N=11\,269$ documents). We manually add 7 new topics (comp.sys, rec.sport, computer, recreation, politics, science and religion) based on the hierarchical structure and perceived similarity of groups (e.g.\ comp.sys.ibm.pc.hardware / comp.sys.mac.hardware). This yields $K = 27$ topics and each document can belong to 1 to 3 topics. To construct feature vectors, we remove words that occur in 5 or fewer documents, and then extract the TFIDF (term frequency $\times$ inverse document frequency) feature of the documents. We generate the \G\ similarity matrix by randomly selecting $N_l$ documents from each of the $27$ topics, and giving each document one $+1$ label (the topic it is selected from) and five $-1$ labels (topics it does not belong to). For SSL, we turn this into assignment labels as described in section~\ref{e:ssl}, and select the smoothing parameter $\epsilon$ optimally from $\{0,\, 0.005,\,  0.01,\, 0.02,\, 0.05,\, 0.1,\, 0.2\}$.

\begin{figure}[t]
  \centering
  \psfrag{NN}{NN}
  \psfrag{SEMI}{SSL}
  \psfrag{SEMI-1}{SSL1}
  \psfrag{SEMI-2}{SSL2}
  \psfrag{KSVM}{KSVM}
  \psfrag{SVM}{SVM}
  \psfrag{Ours}{LASS}
  \psfrag{nl}[t][]{$N_l$}
  \begin{tabular}{@{}c@{\hspace{0.0\linewidth}}c@{}}
    MNIST & 20-newsgroups \\
    \psfrag{err}[][t]{classification error}
    \includegraphics[width=0.50\linewidth,height=0.375\linewidth]{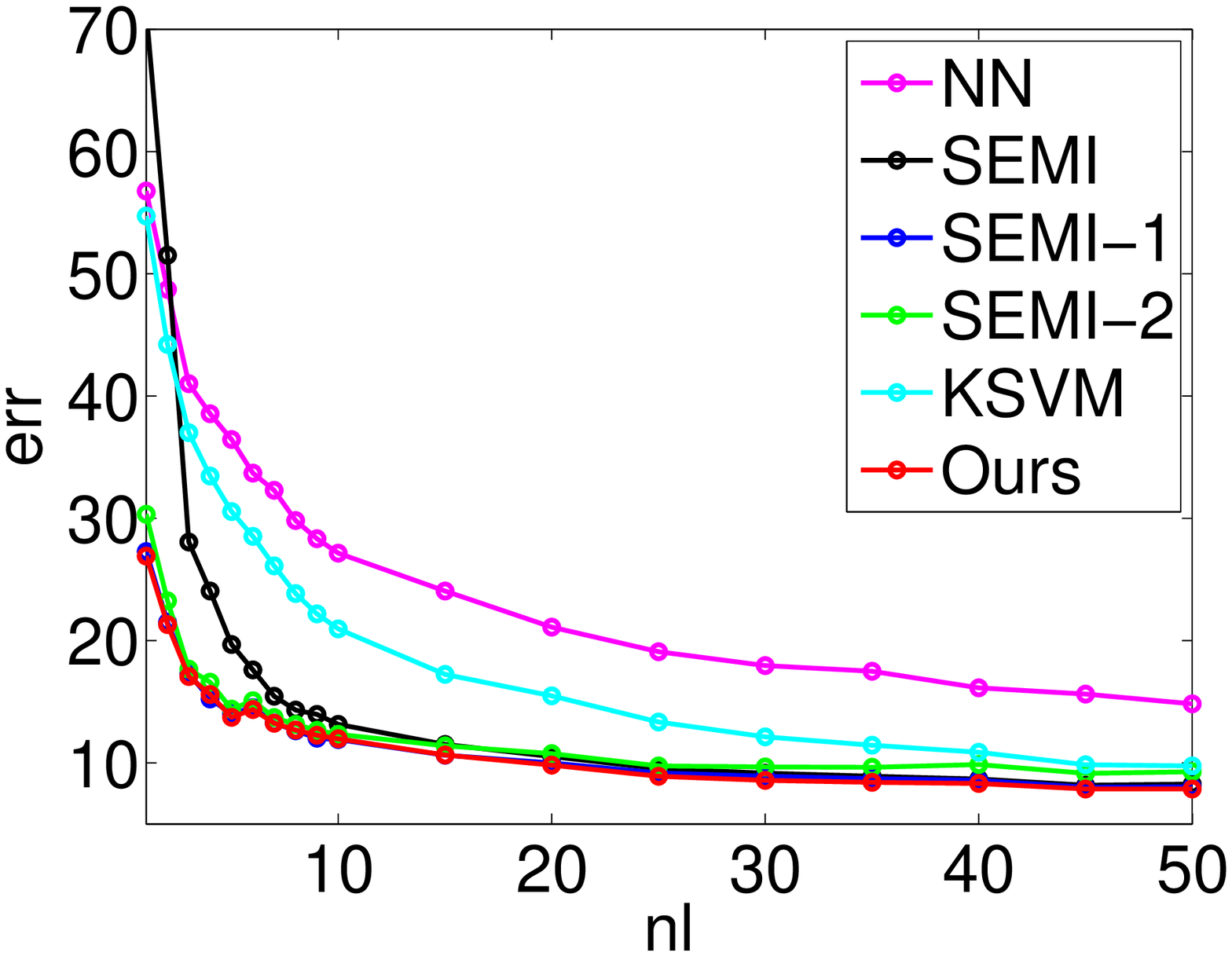} &
    \psfrag{err}[][]{}
    \includegraphics[width=0.50\linewidth]{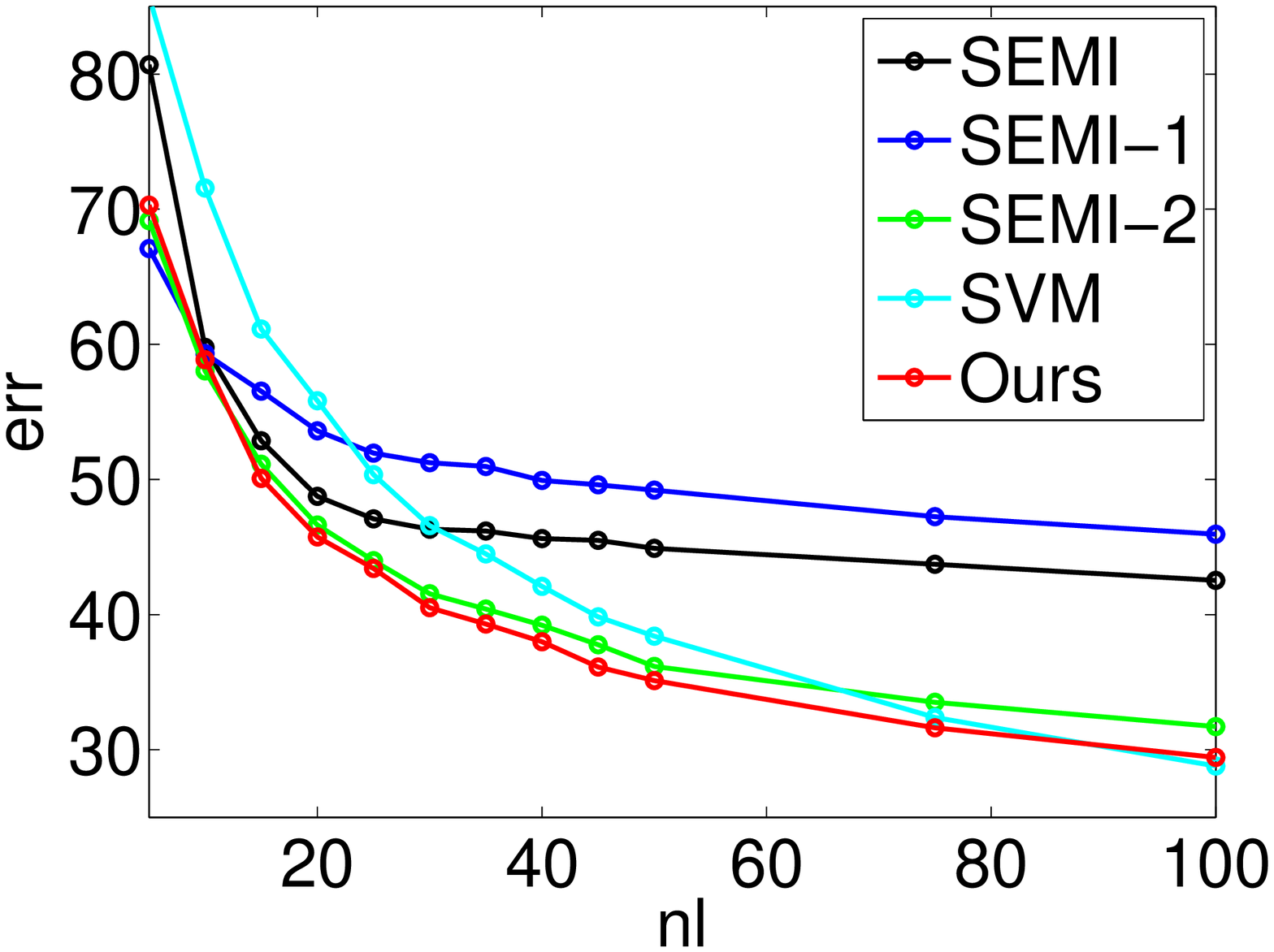}
  \end{tabular}
  \caption{Classification error (\%) vs number of labeled points for each class on MNIST (left) and 20-newsgroups (right) datasets.}
  \label{f:mnist-20news}
\end{figure}

To evaluate the performance, we select for each test document the $T$ topics to which it has highest predicted assignments (where $T\in\{1,2,3\}$ is the actual number of topics this document belongs to), consider them as predicted label set, and compare them with ground truth labels. We consider it as an error if the predicted label set and the ground truth set differ. NN classification does not apply here because no document is fully labeled. We can apply one-vs-all linear SVM (hinge loss penalty parameter $C$ selected from $\{10^{-3},10^{-2},10^{-1},1,10^1,10^2,10^3\}$) because we do have training points for each topic. Fig.~\ref{f:mnist-20news}(right) shows the mean classification error over $20$ random labeled/unlabeled partitions of the dataset as a function of $N_l$. The accuracy of all algorithms again improves when $N_l$ increases. LASS outperforms all other algorithms at nearly all $N_l$ values, and, unlike them, always produces valid assignments.

\paragraph{Image tagging}

This experiment fully benefits from the ability of LASS to handle partial labels, and predict full assignment vectors. In the problem of image tagging, each image can typically be tagged with multiple categories of a large number of possible categories. However, a user will usually tag only a few of the relevant categories for it (e.g.\ out of laziness) and will miss tagging other categories that would be relevant too. The task given a test image is to predict an assignment vector, i.e., to fill in ``soft tags'', for all categories. When there are many possible categories, trying to fill in the missing assignments for the partially labeled samples (so we can use SSL to propagate them to the unlabeled samples) is pointless. In contrast, LASS does not require these missing assignments, by conveniently providing zero affinities.

We demonstrate LASS on a subset of the ESP game \citep{AhnDabbis04a} images used by \citet{Guillaum_09a}. We select the images in the training set that are tagged with at least $6$ categories (words), resulting in $6\,100$ images with a total of $267$ non-empty categories, with $7.2$ categories per image on average. We use the same image feature sets as \citet{Guillaum_09a} to compute distances between images and build a $10$-nearest neighbor graph. We give partial information for $4\,600$ images and provide item-category affinities for each image in the following way: we give positive affinity ($+1$) for a random subset of size $n_l$ from the categories it is tagged with, and give negative affinities ($-1$) randomly for $5$ out of the $20$ most frequent categories it is not tagged with. Providing negative affinities in this way stops the algorithm from concentrating most of the probability mass on the most frequent categories. The other $1\,500$ images are completely unlabeled and used for testing. SSL fills in missing assignments of the partially labeled samples as described in the document categorization experiment. Parameters are selected based on grid search for each algorithm.

We evaluate the performance of different algorithms using the precision, recall and F--1 score (averaged over sample images) on the test samples while fixing the annotation length at $5$, i.e., each image is tagged with the $5$ categories of highest assignment. (Although LASS admits tags as similarity values at test time, we do not use them here.) We vary the number of positive tags $n_l$ from $1$ to $6$. Fig.~\ref{f:espgame} shows the results for SSL and LASS over $20$ runs (each with a different random selection of test set and partial affinities). We could not run one-versus-all SVMs because there are no negative samples for most categories. In SSL2, the highest prediction values are nearly always the most frequent categories. We see that LASS greatly improves over SSL, especially when smaller numbers of positive affinities are given.

\begin{figure}[th!]
  \centering 
  \psfrag{SEMI}{SSL}
  \psfrag{SEMI-1}{SSL1}
  \psfrag{SEMI-2}{SSL2}
  \psfrag{Ours}{LASS}
  \psfrag{nl}[t][]{$n_l$}
  \psfrag{precision}[][]{Precision (\%)}
  \psfrag{recall}[][]{Recall (\%)}
  \psfrag{F1}[][]{F--1}
  \begin{tabular}{@{}c@{\hspace{0.005\linewidth}}c@{\hspace{0.005\linewidth}}c@{}}
    Precision & Recall & F--1 score \\
    \includegraphics[width=0.33\linewidth]{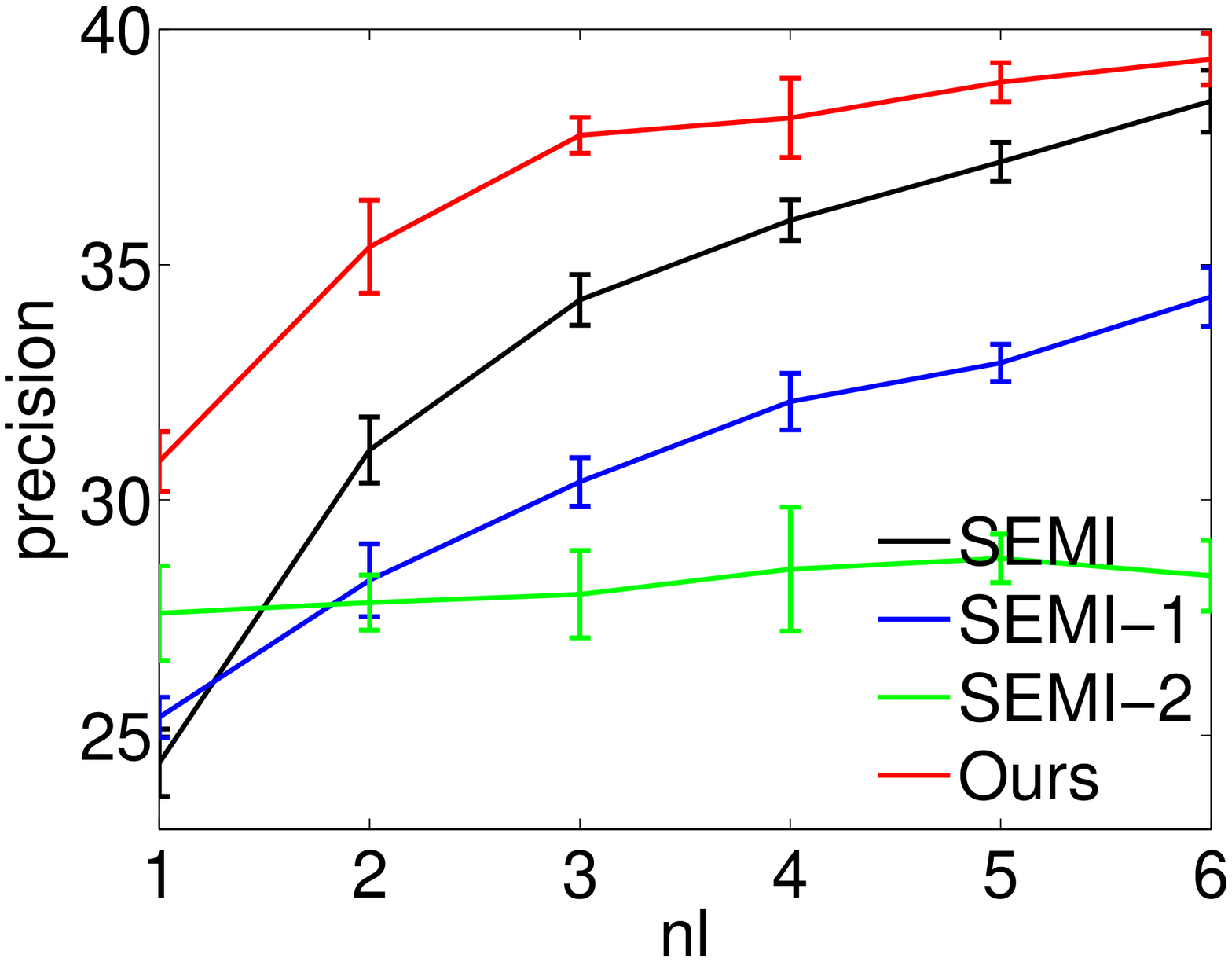} &
    \includegraphics[width=0.33\linewidth]{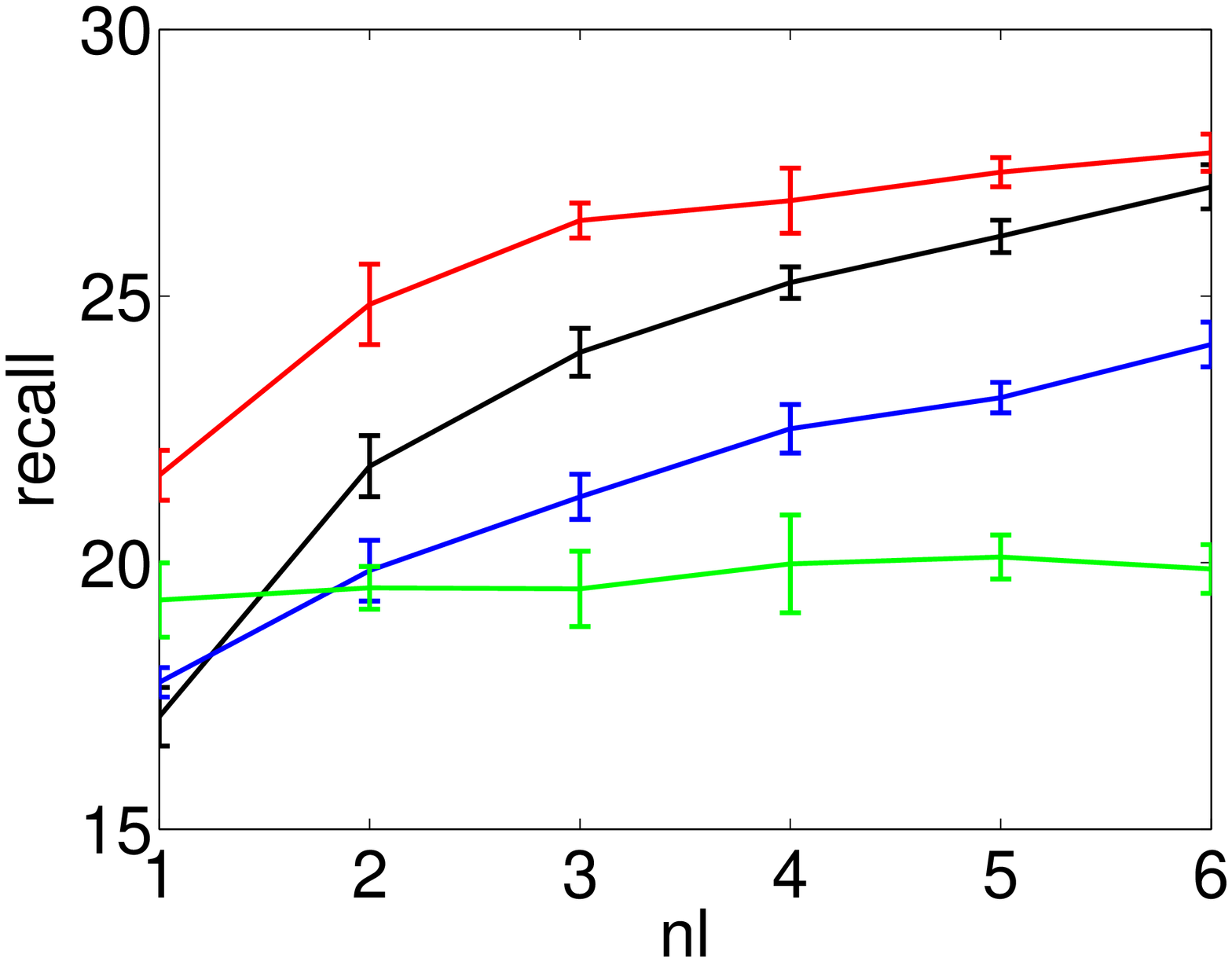} &
    \includegraphics[width=0.33\linewidth]{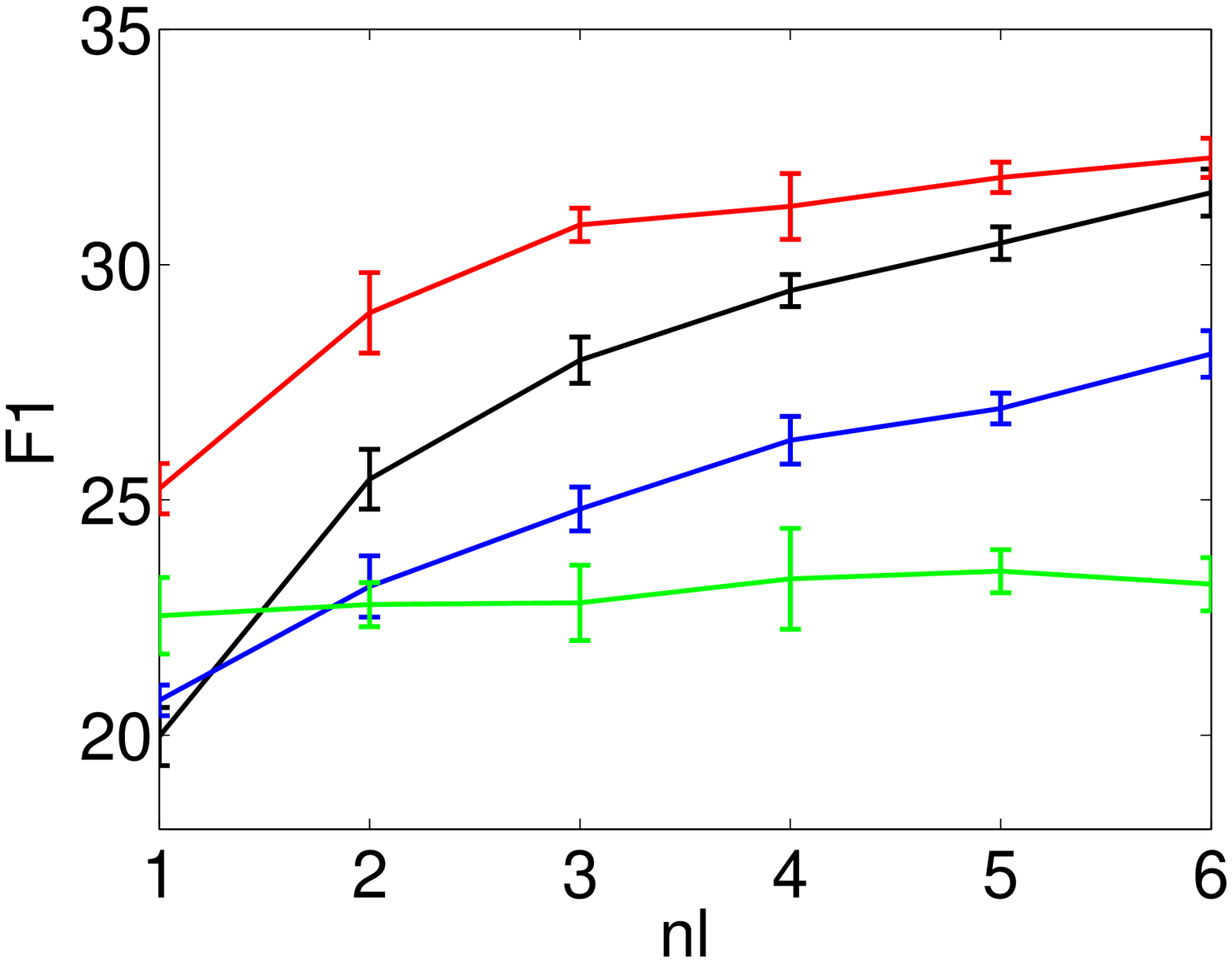}
  \end{tabular} \\[1ex]
  \begin{tabular}{@{}c@{\hspace{0.05\linewidth}}c@{\hspace{0.05\linewidth}}c@{}}
    \parbox[t]{0.3\linewidth}{
      \emph{GT}: dog grass green man sky white \\ 
      \emph{Pred.}: grass man sky green white \textbf{tree}} &
    \parbox[t]{0.3\linewidth}{
      \emph{GT}: black drawing man old soldier tent white \\
      \emph{Pred.}: black old drawing white tent \textbf{sketch} man} &
    \parbox[t]{0.3\linewidth}{
      \emph{GT}: blue computer gray purple screen window \\
      \emph{Pred.}: computer screen gray window  blue \textbf{white}} \\[8.5ex]
    \includegraphics[width=0.3\linewidth,height=0.19\linewidth]{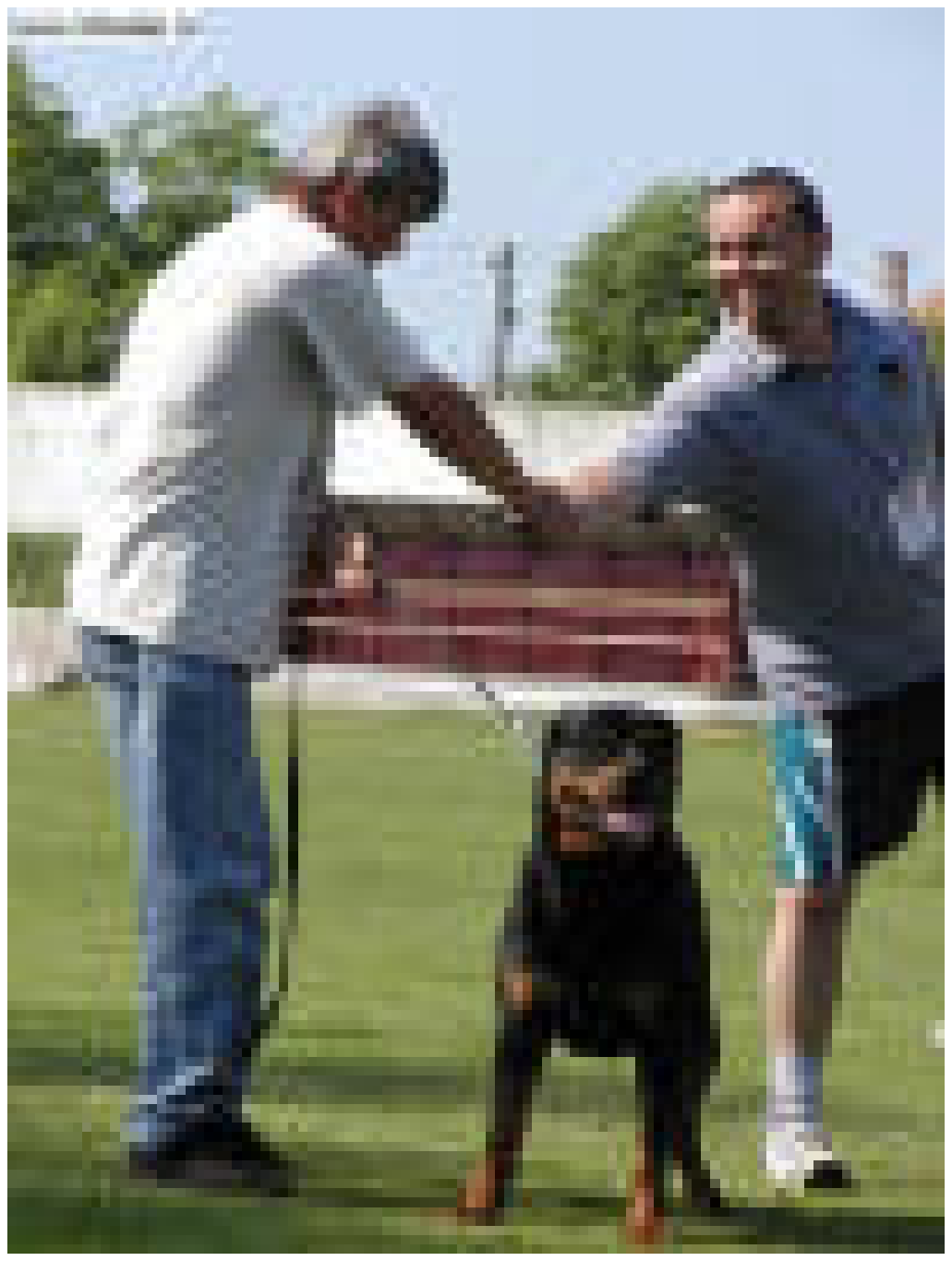} &
    \includegraphics[width=0.3\linewidth,height=0.19\linewidth]{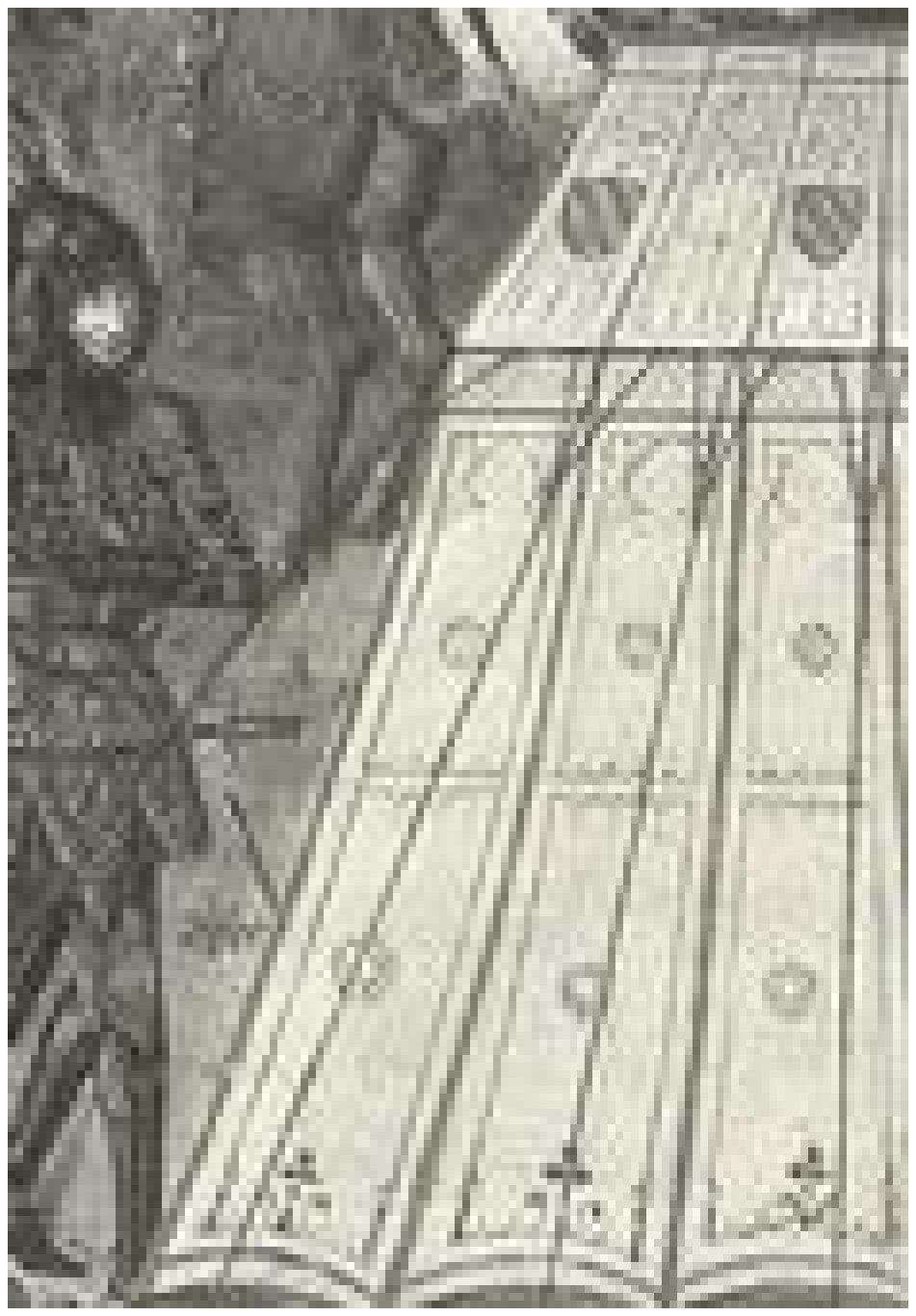} &
    \includegraphics[width=0.3\linewidth,height=0.19\linewidth]{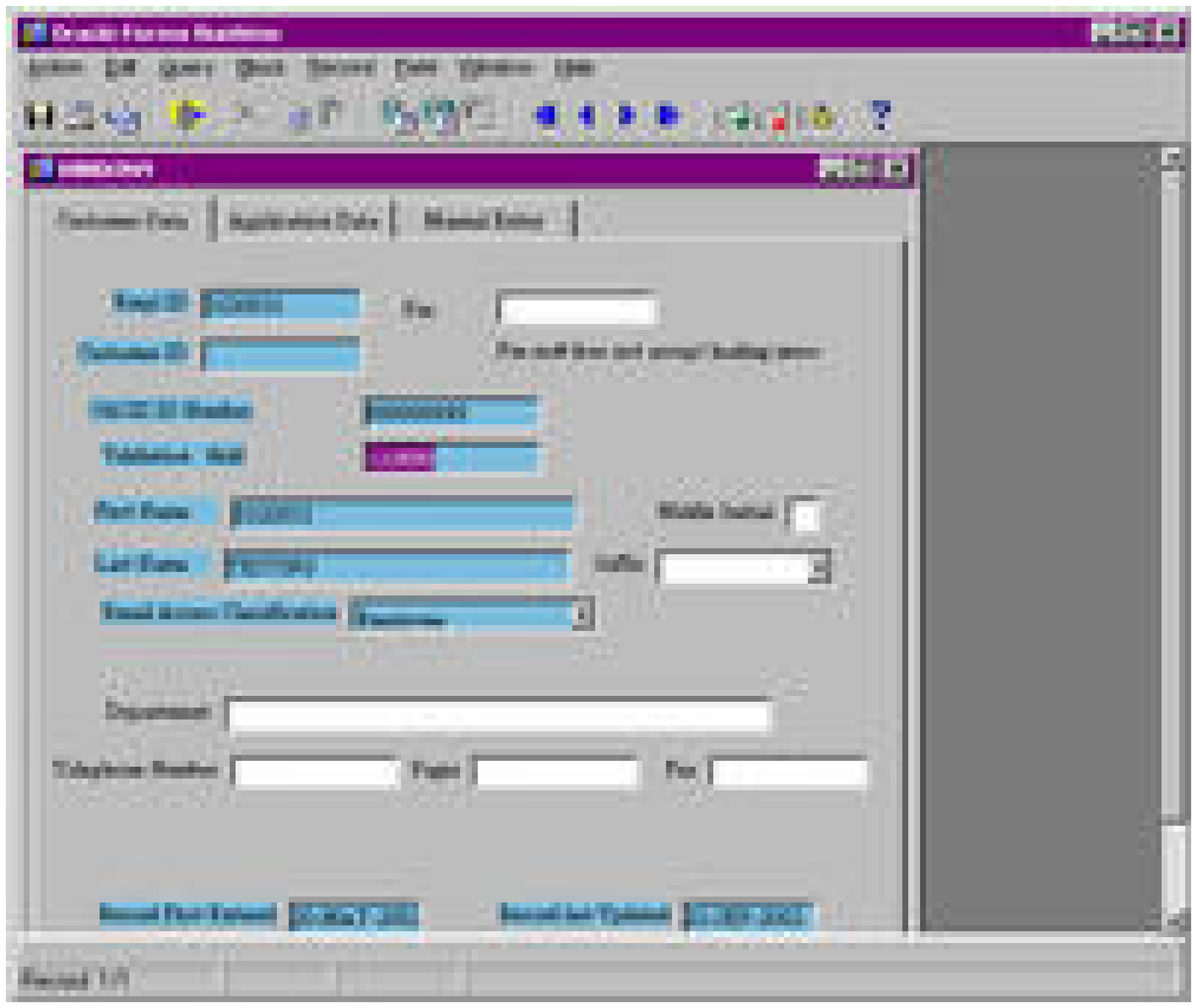} \\[2ex]
    \parbox[t]{0.3\linewidth}{
      \emph{GT}: black drawing hair man nose old white \\
      \emph{Pred.}: black  white drawing man hair \textbf{circle} \textbf{tie}}&
    \parbox[t]{0.3\linewidth}{
      \emph{GT}: black coin man money old round silver white \\
      \emph{Pred.}: black old round coin money  man \textbf{woman} \textbf{gray}} &
    \parbox[t]{0.3\linewidth}{
      \emph{GT}: field grass green people sky tree \\
      \emph{Pred.}: grass sky green \textbf{man} tree \textbf{tent}} \\[8.5ex]
    \includegraphics[width=0.3\linewidth,height=0.19\linewidth]{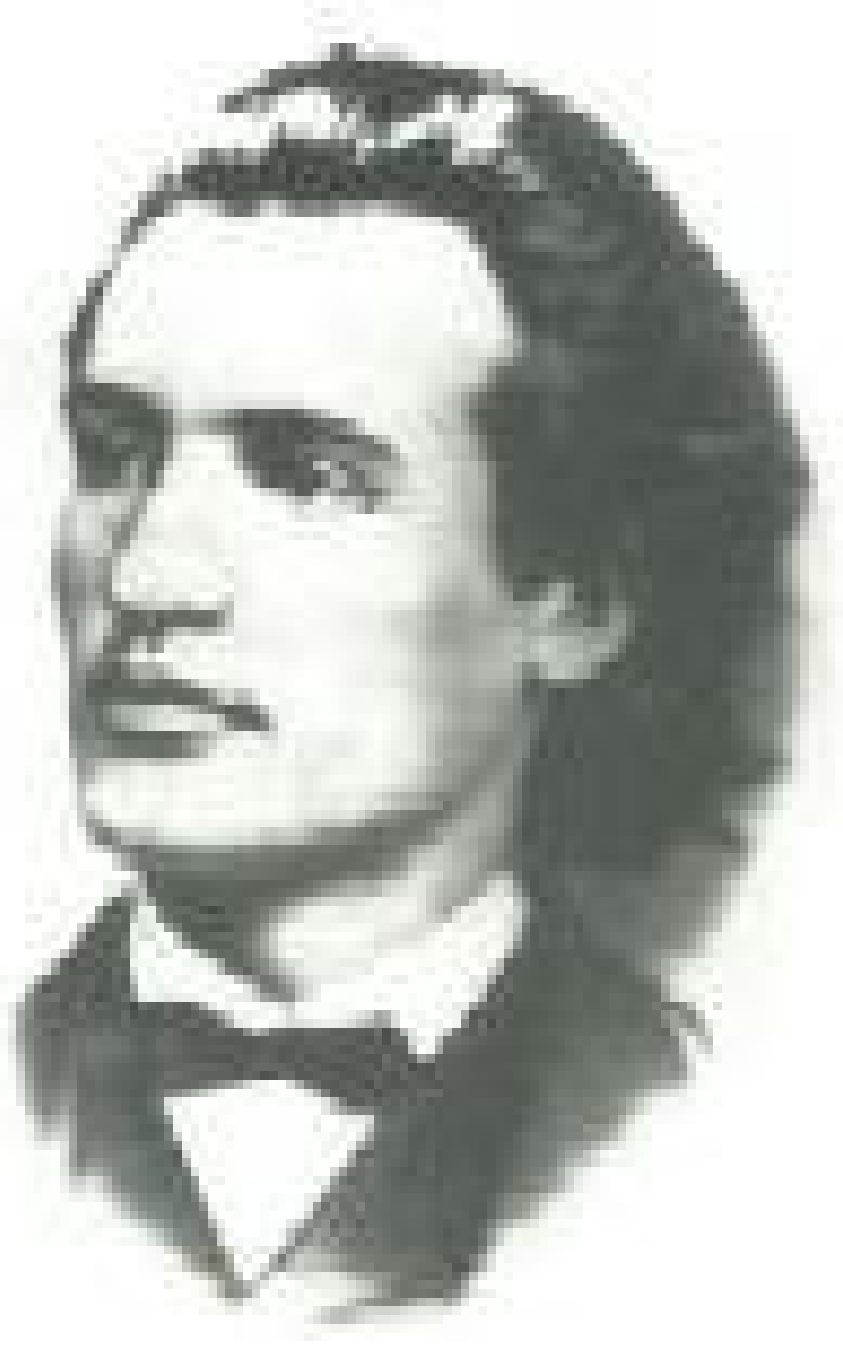} &
    \includegraphics[width=0.3\linewidth,height=0.19\linewidth]{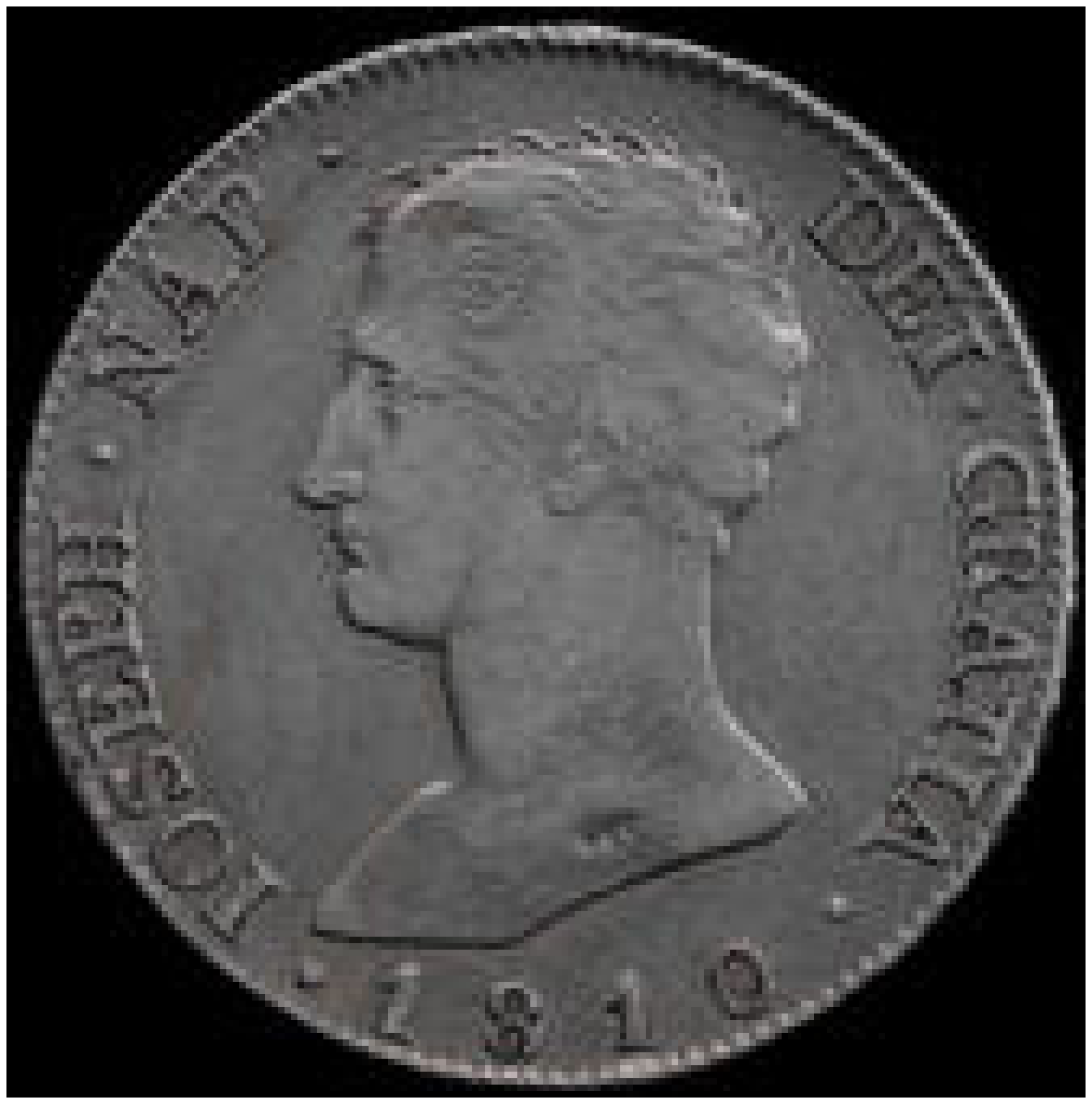} &
    \includegraphics[width=0.3\linewidth,height=0.19\linewidth]{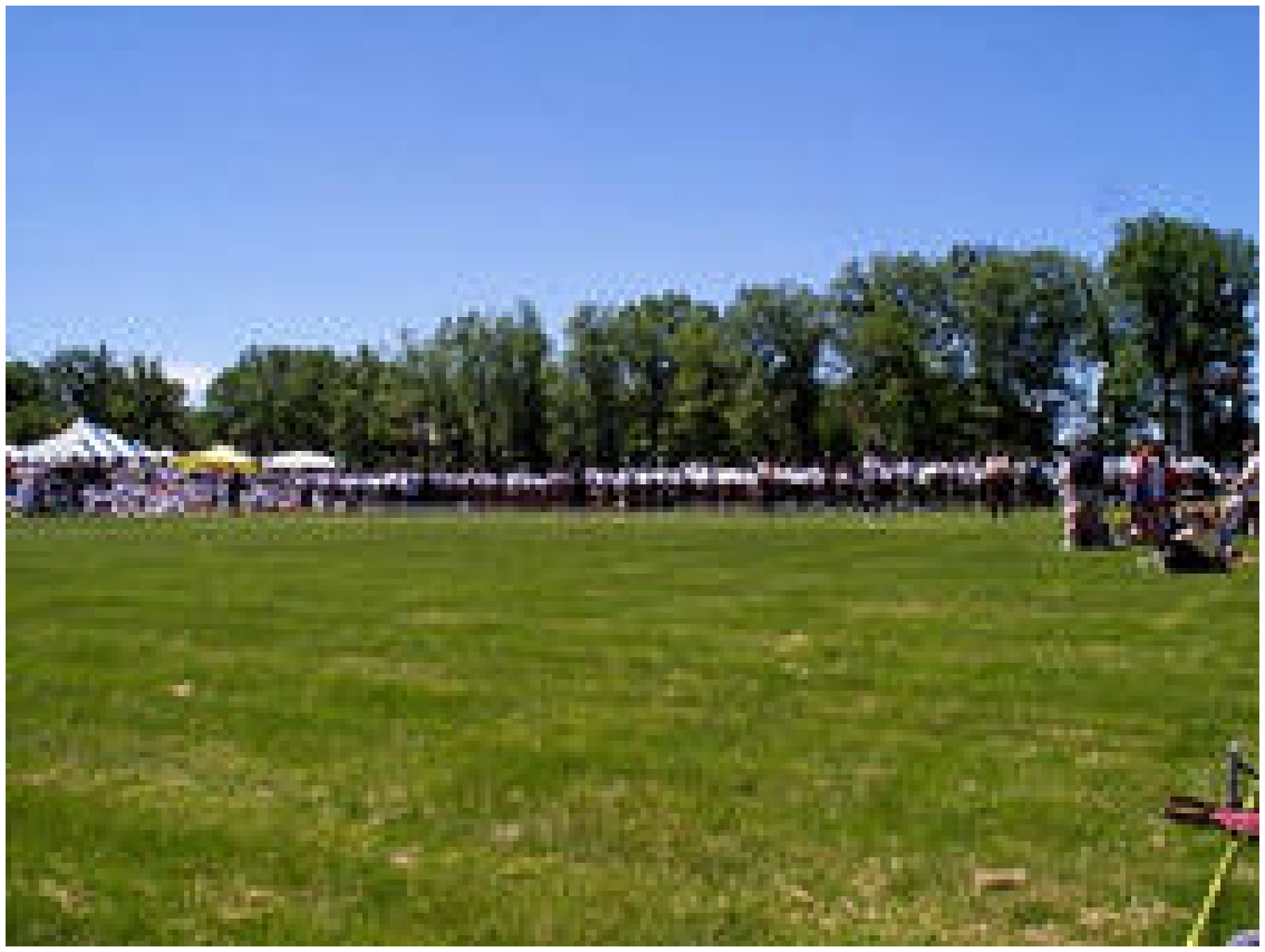}
  \end{tabular}
  \caption{Results on ESP game dataset. \emph{Top}: precision and recall vs number of positive affinities $n_l$, at a fixed annotation length $5$, averaged over $20$ runs. \emph{Bottom}: sample predictions of LASS on test images when $n_l=5$. We show the highest predictions (\emph{Pred.}, in assignment order) up to the number of tags in the ground truth (\emph{GT}, unordered), with \textbf{mismatches} in boldface.}
  \label{f:espgame}
\end{figure}

\section{Conclusion}

We have proposed a simple quadratic programming model for learning assignments of items to categories that combines two complementary and possibly conflicting sources of information: the crowd wisdom and the expert wisdom. It is particularly attractive when fully labeling an item is impractical, or when categories have a complex structure and items can genuinely belong to multiple categories to different extents. It provides a different way to incorporate supervision to that of traditional Laplacian semisupervised learning, which is ill-suited for this setting because the similarity information cannot be faithfully transformed into assignment labels.

We have derived a training algorithm based on the alternating direction method of multipliers. The algorithm's iterations can be made fast by caching the Cholesky factorization of the graph Laplacian. The algorithm is very simple to implement. It requires no line searches and has only one user parameter, the penalty parameter. The algorithm converges for any positive value of the penalty parameter, but this value does affect the convergence rate.

We expect LASS to apply to problems beyond semisupervised learning, such as clustering, and in social network applications, with image, sound or text data that is partially tagged. It can also be extended to handle tensor data or have additional terms in its objective, for example to represent relations between categories with a category-category similarity matrix, or even to use negative similarities in the item-item graph (since the feasible set is bounded, solution(s) still exist). A further application of LASS is to learn probability distributions that are conditional on partial supervisory information, since, in effect, the out-of-sample mapping is a nonparametric mapping from the affinity information to a distribution over categories. Another research direction is to find optimally adaptive schedules for the penalty parameter and to accelerate the convergence of the training algorithm, particularly with large datasets with many categories, where we may expect each row of \Z\ to be sparse.


\end{document}